\title[Projection-free Online Exp-concave Optimization]{Projection-free Online Exp-concave Optimization}
\newtheorem{observation} {Observation}
\newtheorem{assumption} {Assumption}
\DeclareMathOperator*{\argmin}{argmin}
\newcommand{\brac}[1]{\left(#1\right)}
\newcommand{\enorm}[1]{\left\Vert#1\right\Vert}
\newcommand{\matnorm}[2]{\left\Vert#1\right\Vert_{#2}}
\def\vz{{\textbf{0}}}
\def\g{{\mathbf{g}}}
\def\m{{\mathbf{m}}}
\def\x{{\mathbf{x}}}
\def\a{{\mathbf{a}}}
\def\u{{\mathbf{u}}}
\def\v{{\mathbf{v}}}
\def\z{{\mathbf{z}}}
\def\y{{\mathbf{y}}}
\def\A{{\mathbf{A}}}
\def\I{{\mathbf{I}}}
\def\B{{\mathbf{B}}}
\def\C{{\mathbf{C}}}
\def\V{{\mathbf{V}}}
\def\H{{\mathbf{H}}}
\def\S{{\mathbf{S}}}
\newcommand{\R}{\mathcal{R}}
\newcommand{\mC}{\mathcal{C}}
\newcommand{\mK}{\mathcal{K}}
\newcommand{\ball}{\mathcal{B}}
\newcommand{\mbS}{\mathbb{S}}
\newcommand{\dist}{\textrm{dist}}
\newcommand{\diag}{\textbf{diag}}
\newcommand{\trace}{\textrm{Tr}}
\newcommand{\rank}{\textrm{rank}}
\newcommand{\reals}{\mathbb{R}}
\newcommand{\RNum}[1]{\uppercase\expandafter{\romannumeral #1\relax}}
\algnewcommand{\comment}[1]{\Comment{{#1}}}
\begin{document}

\maketitle

\begin{abstract}%
We consider the setting of online convex optimization (OCO) with \textit{exp-concave} losses. The best regret bound known for this setting is $O(n\log{}T)$, where $n$ is the dimension and $T$ is the number of prediction rounds (treating all other quantities as constants and assuming $T$ is sufficiently large), and is attainable via the well-known Online Newton Step algorithm (ONS). However, ONS requires on each iteration to compute a projection (according to some matrix-induced norm) onto the feasible convex set, which is often computationally prohibitive in high-dimensional settings and when the feasible set admits a non-trivial structure. In this work we consider projection-free online algorithms for  exp-concave and smooth losses, where by projection-free we refer to algorithms that rely only on the availability of a linear optimization oracle (LOO) for the feasible set, which in many applications of interest admits much more efficient implementations than a projection oracle. We present an LOO-based ONS-style algorithm, which using overall $O(T)$ calls to a LOO, guarantees in worst case regret bounded by $\widetilde{O}(n^{2/3}T^{2/3})$ (ignoring all quantities except for $n,T$). However, our algorithm is most interesting in an important and plausible low-dimensional data scenario: if the gradients  (approximately) span a subspace of dimension at most $\rho$, $\rho << n$, the regret bound improves to $\widetilde{O}(\rho^{2/3}T^{2/3})$, and by applying standard deterministic sketching techniques, both the space and average additional per-iteration runtime requirements are only $O(\rho{}n)$ (instead of $O(n^2)$). This  improves upon recently proposed LOO-based algorithms for OCO which, while having the same state-of-the-art dependence on the horizon $T$, suffer from regret/oracle complexity that scales with $\sqrt{n}$ or worse.
\end{abstract}


\section{Introduction} \label{sec:intro}

This work contributes to the line of research on efficient projection-free algorithms for online convex optimization (OCO), which has received a significant amount of interest in the theoretical machine learning community in recent years, see for instance \cite{Hazan12, garber2013playing, chen2019projection, garber2020improved, kretzu2021revisiting, hazan2020faster, Levy19, wan2021projection, ene2021projection, pmlr-v80-chen18c, zhang2017projection, Garber22a, mhammedi2021efficient, mhammedi2022, lu2022projection}. We recall that in the setting of OCO \cite{HazanBook, Shalev12} (see formal definition in Section \ref{sec:setup}) a decision maker is required  to iteratively choose an action --- a point in some convex and compact set $\mK\subset\reals^n$ (fixed throughout all iterations) \footnote{for ease of presentation we consider the underlying vector space to be $\reals^n$, however any finite Euclidean space will work}, where after her selection, a convex loss function from $\mK$ to $\reals$ is revealed and the decision maker incurs a loss which equals the value of the loss function evaluated at the point chosen on that round. The performance of the decision maker is measured via the standard notion of regret which is the difference between her accumulated loss throughout all $T$ rounds (where $T$ here is assumed to be known in advanced) and that of the best fixed point in $\mK$ in hindsight. Throughout this work we consider the full-information setting, where after each round, the decision maker gains full knowledge of the loss function used on that round. The term \textit{projection-free} refers to algorithms which avoid the computation of orthogonal projections onto the feasible set $\mK$, as required by most standard algorithms, and instead only access the feasible set via conceptually simpler computational primitives, such as an oracle for linear optimization over $\mK$ (LOO). The motivation for such methods is that indeed for many feasible sets of interest and for high-dimensional problems, implementing the LOO can be much more efficient than projection, see for instance detailed examples in \cite{Jaggi13, Hazan12} (see in the sequel discussion on other projection-free oracles). 
 
In this work we consider, to the best of our knowledge for the first time, efficient projection-free LOO-based algorithms for OCO in case all loss functions are \textit{exp-concave}. We recall that a function $f(\x)$ is  $\alpha$-exp-concave for some $\alpha >0$, if  the function $e^{-\alpha{}f(\x)}$ is concave \footnote{in linear regression one has $g(x) = (x-b)^2$ for some $b\in\reals$, and in online portfolio selection one has $g(x) = -\log(x)$, which is strongly convex on $\reals_{>0}$}. Exp-concavity is a property which is well known to allow for faster convergence rates (in terms of regret). In particular, exp-concave losses underly some of the most important applications of OCO such as online linear regression and  online portfolio selection. More generally, any loss of the form $f(\x) = g(\a^{\top}\x)$ with $g:\reals\rightarrow\reals$ strongly convex, is exp-concave.
While for general convex functions the optimal regret bound attainable (by any algorithm) is $O(\sqrt{T})$ (treating all quantities except for $n,T$ as constants), in case all losses are exp-concave, a regret bound of the form $O(n\log{}T) $ is attainable \cite{hazan2007logarithmic, HazanBook}, which is faster for any fixed dimension $n$ and $T$ large enough. The latter regret bound is attainable via a well-known algorithm known as \textit{Online Newton Step} (ONS), however, ONS requires on each iteration to compute a non-Euclidean projection onto the feasible set w.r.t. to some matrix-induced norm (this matrix aggregates all the gradients of the losses observed so far), and hence is often computationally prohibitive in high-dimensional settings and when the feasible set $\mK$ admits non-trivial structure. 

Our main contribution is a novel projection-free LOO-based variant of ONS for exp-concave and smooth (Lipschitz continuous gradient) losses. Using overall $O(T)$  calls to a LOO (throughout all rounds), our algorithm guarantees in worst case $\widetilde{O}(n^{2/3}T^{2/3})$ regret (where currently for ease of presentation we treat all quantitates except for $n,T$ as constants, and $\widetilde{O}$ hides poly-logarithmic factors). However, our algorithm is most interesting in  a highly popular and plausible scenario in high-dimensional analytics, namely when the observed gradients of the loss functions (the data fed into the algorithm), approximately, span only a low dimensional subspace. Denoting by $\rho$ the (approximate) dimension of the subspace spanned by the gradients,  by a simple tuning of parameters, our regret bound improves to $\widetilde{O}(\rho^{2/3}T^{2/3})$, which is independent of the ambient dimension $n$. Moreover, by leveraging well-known efficient deterministic sketching techniques, as was already proposed in \cite{luo2016efficient} (but not in the context of projection-free algorithms), we can also reduce the memory and additional average runtime per iteration from $O(n^2)$ to only $O(\rho{}n)$, i.e., linear in the dimension for a constant $\rho$. 

To put our results in perspective, the best previous regret bound for a LOO-based algorithm for OCO (that holds for arbitrary convex losses and with no assumption on the span of the gradients) which is dimension-independent is $O(T^{3/4})$ and requires overall  $O(T)$ calls to the LOO \cite{Hazan12, Garber22a}. Two recently proposed LOO-based algorithms also improved the dependence on the horizon $T$ from $T^{3/4}$ to $T^{2/3}$, however suffer from regret and/or oracle complexity which scales with $\sqrt{n}$ or worse: the regret bound of the Follow The Perturbed Leader-based algorithm of  \cite{hazan2020faster} has a regret bound of the form $O(\sqrt{n}T^{2/3}$), while the Follow The Leader-based algorithm of \cite{mhammedi2022} (which is based on approximating the feasible set with a strongly convex set, which leads to the faster rate) requires overall $\widetilde{O}(nT)$ calls to the LOO and has a regret bound of the form $O((R/r)^{2/3}T^{2/3})$, where $R/r$ is the ratio between an enclosing ball and an enclosed ball, which often scales with $\sqrt{n}$ and even with $n$ (e.g., for the simplex or the spectrahedron, see \cite{mhammedi2021efficient}). Moreover, the additional runtime per iteration of the algorithm in \cite{mhammedi2022} scales with $n^3$.
Unfortunately, such explicit dependencies on the ambient dimension may be prohibitive for high-dimensional problems, which is indeed the typical setting of interest for projection-free methods. Thus, it is interesting whether it is possible to obtain a fast $T^{2/3}$ rate  without explicit dependence on the ambient dimension. 

A very popular approach to circumvent explicit dependencies on the ambient dimension, which underlies numerous models in statistics/high-dimensional analytics and is observed frequently in real-world scenarios, is the assumption that the data, at least approximately, lies  only in a low-dimensional subspace. In our context of OCO with exp-concave losses, as discussed above, many losses of interest take the form $f_t(\x) = g_t(\a_t^{\top}\x)$,  $g_t:\reals\rightarrow\reals$, with the gradient vector being $\nabla{}f_t(\x) = g_t'(\a_t^{\top}\x_t)\a_t$. Thus, when the data vectors $\a_1,\dots,\a_T$ approximately span only a low-dimensional subspace (in the sense that the eigenvalues of the unnormalized covariance $\lambda_i(\sum_{t=1}^T\a_t\a_t^{\top})$ are sufficiently small for all $i\geq \rho+1$, for some $\rho<<n$), our regret bound becomes dimension-independent and thus suitable for such popular high-dimensional settings. To the best of our knowledge, the fast (in terms of $T$, but not $n$) algorithms proposed in  \cite{hazan2020faster, mhammedi2022} cannot efficiently leverage low-dimensionality of the gradients.

Table \ref{table:Op} gives a short summery of our results as well as a comparison to related LOO-based algorithms for OCO.

On the technical side, our work primarily builds on the recent approach of \cite{Garber22a} which suggested a LOO-based projection-free variant of the well known Euclidean Online Projected Gradient Descent method \cite{Zinkevich03, HazanBook}, and is based on the concept of \textit{approximately-feasible (Euclidean) projections} \footnote{\cite{Garber22a} originally used the terminology \textit{close infeasible projections}}, which refers to the computation of points which on one-hand, while infeasible w.r.t. the decision set $\mK$, still  satisfy certain properties related to orthogonal projections and are sufficiently close to the feasible set, which drives the regret bound, and on the other-hand, could be computed efficiently using only a limited number of queries to the LOO of the feasible set via the classical Frank-Wolfe algorithm for \textit{offline} convex minimization \cite{FrankWolfe, Jaggi13}. Here we provide a non-trivial extension of  this framework, from supporting only Euclidean (approximately feasible) projections, to supporting projections w.r.t. matrix-induced norms as employed by ONS. We also substantially improve the bound on the oracle complexity required to compute such approximately-feasible projections, which is crucial to obtaining our faster regret rate ($T^{2/3}$ instead of $T^{3/4}$ in  \cite{Garber22a}).

\paragraph{Other projection-free oracles:}
We mention in passing that while, as in this work, most literature on projection-free OCO  assumes the feasible set is accessible through a LOO, some recent works have also considered other oracles such as a separation oracle or a membership oracle \cite{mhammedi2021efficient, Garber22a, lu2022projection}. While each of these oracles could be implemented via the others (see for instance \cite{tat2017efficient}), none of them is generically superior to the other (in terms of efficiency of implementation). Finally, the very recent work \cite{mhammedi2022oqns} considers an efficient variant of ONS which is based on accessing the feasible set only through a separation oracle, however it requires the feasible set $\mK$ to by symmetric in the sense that $\mK = -\mK$, which is fairly restrictive.

\begin{table} \renewcommand{\arraystretch}{1.4}
{\footnotesize
\begin{center}
  \begin{tabular}{| c | c  | c | c | c | c | c |} \hline
    Reference & \makecell{Additional \\ assumptions} & \makecell{Based \\ on} & \makecell{Deter-\\ministic?} & \makecell{LOO \\ calls }& \makecell{Additional \\ runtime}  &  Regret  \\ \hline
     \makecell*[{{p{2.1cm}}}]{\cite{Hazan12}} & - & RFTL & \checkmark &  $T$  &  $n T $  &  $ RG  T^\frac{3}{4}$ \\ \hline
    \makecell*[{{p{2.1cm}}}]{\cite{Garber22a}} & -  & OGD & \checkmark &  $T$  &  $n T $  & $RG  T^\frac{3}{4}$  \\ \hline
    \makecell*[{{p{2.1cm}}}]{\cite{mhammedi2022}} & $r \ball \subseteq \mK $ &  FTL & $\times$ &  $nT$  &  $n^3 T $  & $ GR(R/r)^{\frac{2}{3}} T^\frac{2}{3}$  \\ \hline
     \makecell*[{{p{2.1cm}}}]{\cite{hazan2020faster}}  & \makecell{ $\beta$-smooth \\ losses }  & FTPL & $\times$ &  $T$  &  $n T $  & $R\brac{G\sqrt{n}+\beta R} T^\frac{2}{3}$ \\ \hline
    Theorem \ref{thm:mainthm:short} & \makecell{ $\alpha$-exp concave \\ and $\beta$-smooth losses } & ONS & \checkmark & $T$ & $n^2T$ &    \makecell{$\brac{G  + \alpha^{-1} }R n^\frac{2}{3} T^\frac{2}{3}$ \\ $ + \beta R^2 T^\frac{2}{3}$} \\ \hline
    Theorem \ref{thm:LOO-ONS-FDS} & \makecell{ 1. $\alpha$-exp concave \\ and $\beta$-smooth losses \\
    2. gradients approx.\\ span  $\rho$-dim. subspace (*)}& ONS & \checkmark  &$T$  & $\rho nT $ &  \makecell{$\brac{G  + \alpha^{-1} }R\rho^\frac{2}{3} T^\frac{2}{3}$ \\ $ + \beta R^2 T^\frac{2}{3}$}  \\ \hline 
  \end{tabular} 
\caption{ Summary of results and comparison to previous LOO-based methods (applicable to arbitrary convex and compact sets). 
$G$ denotes an upper-bound on the $\ell_2$ norm of the gradients, $R$ denotes the radius of the feasible set $\mK$, and $\ball$ denotes the unit Euclidean ball centered at the origin. Condition (*) should be understood as $\sum_{i=\rho+1}^n\left({\sum_{t=1}^T\nabla_t\nabla_t^{\top}}\right) = O(T^{2/3})$, where $\lambda_i(\cdot)$ denotes the $i$-th largest eigenvalue and $\nabla_t\in\reals^n$ denotes the gradient of the loss observed on round $t$.
The bounds omit constants and poly-logarithmic factors.} \label{table:Op}
\end{center}
}
\end{table}\renewcommand{\arraystretch}{1}

\section{Preliminaries}

\subsection{Notation}
We let $\Vert{\cdot}\Vert$ denote the Euclidean norm over $\reals^n$. For a positive semidefinite matrix $\A$, we let $\Vert{\cdot}\Vert_{\A}$ denote the induced norm over $\reals^n$, i.e., for any $\x\in\reals^n$, $\Vert{\x}\Vert_{\A} = \sqrt{\x^{\top}\A\x}$. We let $\mbS^n, \mbS^n_+, \mbS^n_{++}$ denote the space of real symmetric $n\times n$ matrices, the set of all real $n\times n$ (symmetric) positive semidefinite matrices, and the set of all real $n\times n$ (symmetric) positive definite matrices, respectively. We use the standard notation $\A\succeq 0$ ($\A\succ 0$) to denote that $\A\in\mbS^n_{+}$ ($\A\in\mbS^n_{++}$). For a matrix $\A\in\mbS^n$ and $i\in[n]$, we let $\lambda_i(\A)$ denote the $i$-th largest (signed) eigenvalue of $\A$. We denote by $\A \bullet \B$ the standard inner product between two matrices in $\mbS^n$, i.e., $\A \bullet \B  = \sum_{i=1}^{n} \sum_{j=1}^{n} \A_{i,j}\B_{i,j} = \trace(\A\B^\top)$. We let $\ball$ denote the unit Euclidean ball in $\reals^n$ centered at the origin. Given a convex and compact set $\mC\subset\reals^n$, a point $\x\in\reals^n$, and a positive definite matrix $\A\in\mbS^n_{++}$, we let $\dist(\x,\mC)$ and $\dist_{\A}(\x,\mC)$ denote the Euclidean distance of $\x$ from $\mC$ and the distance induced by $\A$ of $\x$ from $\mC$, respectively. That is, $\dist(\x,\mC) = \min_{\y\in\mC}\Vert{\x-\y}\Vert$, $\dist(\x,\mC) = \min_{\y\in\mC}\Vert{\x-\y}\Vert_{\A}$. 

\subsection{Problem setup: online exp-concave and smooth optimization with a LOO}\label{sec:setup}
We recall the setting of OCO \cite{HazanBook, Shalev12}, in which, a decision maker is required for $T$ rounds ($T$ is assumed known in advanced for simplicity), to select on each round some point $\x^t\in\mK$, where $\mK\subset\reals^n$ is convex and compact (and fixed throughout all rounds). After making her choice on round $t$, the decision maker observes a convex loss function $f_t:\mK\rightarrow\reals$ and incurs the loss $f_t(\x^t)$. The goal of the decision maker is to minimize her  regret which is given by
\vspace{-15pt}
\begin{align*}
    \mathcal{R}_T = \sum_{t=1}^{T} f_t(\x^t) - \min_{\x \in \mK} \sum_{t=1}^{T} f_t(\x),
\end{align*}
i.e., it is the difference between her cumulative loss, and the cumulative loss of the best fixed point in $\mK$ in hindsight.

Throughout this work we assume the feasible set is accessible through a linear optimization oracle, which means that for any $\g\in\reals^n$ we can efficiently compute some $\v^*\in\argmin_{\v\in\mK}\v^{\top}\g$. 

We now turn to discuss our specific assumptions on the loss functions $f_1,\dots,f_T$. 
In the following definitions we let $\mC$ denote a convex and compact subset of $\reals^n$.
\begin{definition}\label{def:smooth}
    We say $f: \mC \to \reals$ is $\beta$-smooth over $\mC$, for some $\beta \geq 0$, if for every $\x,\y \in \mC$ it holds that $\enorm{\nabla f(\x) - \nabla f(\x)} \leq \beta \enorm{\x-\y}$.
\end{definition} 
\begin{definition}\label{def:exp_concave}
   We say $f: \mC \to \reals$ is $\alpha$-exp concave over $\mC$, for some $\alpha > 0$, if $e^{-\alpha f(\x)}$ is concave over $\mC$.
\end{definition} 
We recall that an exp-concave function is in particular convex (see \cite{HazanBook}). In fact, we shall consider a weaker condition than exp concavity, which we shall refer to as a curvature condition.
\begin{definition}\label{def:exp_concave_property}
    Let $R$ denote the radius of $\mC$, i.e., $\max_{\x,\y\in\mC}\Vert{\x-\y}\Vert \leq 2R$. A differentiable function $f:\mC\rightarrow{}R$ with gradients upper-bounded in $\ell_2$ norm by some $G>0$ over $\mC$, is said to satisfy the \textit{curvature condition} over $\mC$ with some parameter $\alpha>0$, if for every $\eta \geq \max\{ 4GR, 2/\alpha \}$ and  every $\x,\y \in \mC$, it holds that
        $f(\x) - f(\y)  \leq \nabla f(\x)^\top \brac{\x -\y} -  \frac{1}{2 \eta} \brac{\x -\y}^\top \nabla f(\x) \nabla f(\x)^\top \brac{\x -\y}$.
\end{definition}
This condition is weaker than exp-concavity in the sense that an $\alpha$-exp-convave function also satisfies the curvature condition with the same parameter $\alpha$ \cite{HazanBook}.

The following assumption records all of our assumptions on the loss functions $f_1,\dots,f_T$, which we assume to hold throughout the rest of the paper.
\begin{assumption}\label{ass:mainass}
The loss functions $f_1,\dots,f_T$, are all  $\beta$-smooth, have gradients upper-bounded in $\ell_2$ norm by some $G>0$, and satisfy the curvature condition with some parameter $\alpha > 0$, over the set $3R\ball$, where $R$ denotes the radius of a ball enclosing $\mK$ and centered at the origin. \footnote{The consideration of a set strictly containing $\mK$ (the ball $3R\ball$) in which these assumptions hold is required since our algorithm will query gradients of the loss functions at infeasible points. For ease of presentation we consider the enclosing set $3R\ball$, however this could be very much relaxed to consider a set only slightly larger than $\mK$ in which the assumption needs to hold, see discussion in Section \ref{sec:AssDiscuss}.} 

\end{assumption}



\subsection{Online Newton step with approximately-feasible (matrix) projections}
We now begin to discuss our high-level approach towards efficient LOO-based implementation of the Online Newton Step method. As discussed, our approach builds on the one in \cite{Garber22a}, which considered the Euclidean Online Gradient Descent method, and extends it to ONS which requires non-Euclidean projections according to matrix-induced norms. 

One of our central algorithmic building blocks is an oracle for computing \textit{approximately-feasible projections} onto the feasible set $\mK$ w.r.t. to some matrix-induced norm, which we now define. In the sequel we show how such an oracle could be implemented efficiently using only a LOO for the feasible set $\mK$. 
\begin{definition} \label{def:app_feasible_projection}
Given a convex and compact set $\mK\subset\reals^n$, a positive definite matrix $\A\in\mbS^n_{++}$, and a tolerance $\epsilon > 0$, we say a function $\mathcal{O}_{AFP}(\y,\A,\epsilon,\mK)$ is an \textit{approximately-feasible projection (AFP) oracle} (for the set $\mK$ with parameters $\A,\epsilon$), if for any input point $\y\in\reals^n$, it returns some $\brac{\x,\widetilde{\y}}\in\mK\times\reals^n$ such that i.
for all $\z\in\mK$, $\Vert \widetilde{\y} - \z \Vert_\A \leq \Vert \y - \z \Vert_\A $, and ii.
$\Vert{\x-\widetilde{\y}}\Vert_{\A}^2 \leq \epsilon$.
\end{definition}

Equipped with the concept of an AFP oracle, we can now introduce our second central algorithmic building block --- a template for ONS-style algorithms that only accesses the feasible set $\mK$ through an AFP oracle. As opposed to the standard (projection-based) ONS which maintains a single sequence of feasible points, Algorithm \ref{alg:ONS-WF} maintains two main sequences: one sequence ($\{\widetilde{\y}_m\}_{m\geq 1}\}$) which is infeasible and corresponds to an ONS-style update, and another sequence ($\{\x_m\}_{m\geq 1}\}$)
which is feasible and point-wise close to the previous sequence. We refer to Algorithm \ref{alg:ONS-WF} as a template since it does not explicitly state how to choose the matrices $\A_m, m=1,2,\dots$, used in the algorithm, but only states some restrictions on them. This will be useful later on to derive our two variants: one in which $\A_m$ is based on exact aggregation of gradients (as in standard ONS), and the other which is only a certain approximation via a matrix sketching technique and useful for reducing memory and runtime requirements in case the gradients span (approximately) only a low-dimensional subspace. Finally, Algorithm \ref{alg:ONS-WF} partitions the prediction rounds $1,\dots,T$ into consecutive disjoint blocks of size $K$ (denoted by a subscript of $m$). This will be important to make sure the AFP oracle is called only once every $K$ iterations, which will allow to upper bound the number of LOO calls required to implement it according to our needs.
\begin{algorithm2e}
\KwData{horizon $T$, block length $K$,  learning rate $\eta>0$, initialization parameter $\epsilon_{I}>0 $, error tolerance $\epsilon>0 $, approximately-infeasible projection oracle $\mathcal{O}_{AFP}\brac{\cdot,\cdot,\cdot,\mK}$}
$\x_1=\widetilde{\y}_{1} \gets $ arbitrary point in $\mK$\\
$\A_0 = \epsilon_{I} \I_n$\\
\For{$~ m = 1,\ldots,T/K ~$}{
    Set $\bar{\nabla}_m = \vz$\\ 
    \For{$~ s = 1,\ldots,K ~$}{
    Play $\x^t = \x_{m}$ for $t=(m-1)K+s$  \\
    Set ${\nabla}_t  =\nabla  f_t(\widetilde{\y}_{m})$ and update $\bar{\nabla}_m = \bar{\nabla}_m + {\nabla}_t$ 
    }
    Update $\A_m$ such that $\A_0 \preceq \A_m \preceq \A_{m-1} + \bar{\nabla}_m \bar{\nabla}_m^\top$\\
    Update $\y_{m+1} = \widetilde{\y}_{m} - \eta \A_{m}^{-1} \bar{\nabla}_m$\\
    Set $\brac{\x_{m+1},\widetilde{\y}_{m+1}} \gets \mathcal{O}_{AFP}(\y_{m+1},\A_{m},3\epsilon, \mK)$
}
\caption{Template for Online Newton Step Without Feasibility}\label{alg:ONS-WF}
\end{algorithm2e}
The following lemma states the regret bound of Algorithm \ref{alg:ONS-WF} that will be used to derive all following regret bounds. \begin{lemma}\label{lemma:ONS-WF}
Consider running Algorithm \ref{alg:ONS-WF} with some block size $K \in [T]$ \footnote{without loosing much generality, throughout this paper we assume that the chosen block size $K$ is integer and divides $T$, which will  ease the  analysis. Waiving this convention will only add lower-order terms to our regret bounds}
 and with $\epsilon_I \geq G^2 K^2$, $\eta \geq \max\{ 12KGR, \frac{2K}{\alpha} \}$.
Suppose further that for all $m$ it holds that $\widetilde{\y}_m\in{}3R\ball$. Then,  it holds that
\begin{align*}
    \forall \x \in \mK : ~ \sum_{t=1}^{T} f_t (\x^t) - f_t ( \x)  & \leq \frac{3 \beta   \epsilon}{\epsilon_I} T +  \sqrt{\frac{6\epsilon{}T}{K}\sum_{m=1}^{T/K}  \Vert \bar{\nabla}_m \Vert_{\A_{m}^{-1}}^2 } + \frac{2R^2 \epsilon_I}{\eta} +  \frac{\eta  }{2} \sum_{m=1}^{T/K} \matnorm{{\nabla}_m}{\A_{m}^{-1}}^2 .
\end{align*}
\end{lemma}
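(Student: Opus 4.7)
The plan is to adapt the classical ONS regret analysis (as in Hazan's book, Theorem 4.5) to the block-wise, AFP-based setting of Algorithm \ref{alg:ONS-WF}. Since each gradient $\nabla_t$ is evaluated at the infeasible iterate $\widetilde{\y}_m$ rather than the played point $\x_m$, I would first apply the curvature condition (Definition \ref{def:exp_concave_property}) at $\widetilde{\y}_m$ with parameter $\eta/K$, which is legitimate because the hypothesis $\eta \geq \max\{12KGR,2K/\alpha\}$ gives $\eta/K \geq \max\{4GR,2/\alpha\}$ and $\widetilde{\y}_m \in 3R\ball$ by assumption, then use $\beta$-smoothness to transfer the resulting inequality from $\widetilde{\y}_m$ to $\x_m$. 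Summing across the $K$ rounds of block $m$ and invoking the PSD inequality $\sum_{s=1}^K \nabla_t\nabla_t^\top \succeq \frac{1}{K}\bar{\nabla}_m\bar{\nabla}_m^\top$ (a Cauchy-Schwarz consequence of $\bar{\nabla}_m = \sum_s \nabla_t$) collapses the per-round curvature corrections into a single rank-one correction at the block level, yielding the surrogate
\[\textstyle \sum_{s=1}^K\bigl[f_t(\x_m) - f_t(\x)\bigr] \leq \bar{\nabla}_m^\top(\x_m - \x) + \frac{K\beta}{2}\Vert\x_m - \widetilde{\y}_m\Vert^2 - \frac{1}{2\eta}\Vert\widetilde{\y}_m - \x\Vert_{\bar{\nabla}_m\bar{\nabla}_m^\top}^2.\]

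Next I would split $\bar{\nabla}_m^\top(\x_m - \x) = \bar{\nabla}_m^\top(\widetilde{\y}_m - \x) + \bar{\nabla}_m^\top(\x_m - \widetilde{\y}_m)$. For the first piece, expanding $\Vert\y_{m+1} - \x\Vert_{\A_m}^2$ using the update $\y_{m+1} = \widetilde{\y}_m - \eta\A_m^{-1}\bar{\nabla}_m$ gives the standard ONS identity
\[\textstyle\bar{\nabla}_m^\top(\widetilde{\y}_m - \x) = \frac{1}{2\eta}\bigl(\Vert\widetilde{\y}_m - \x\Vert_{\A_m}^2 - \Vert\y_{m+1} - \x\Vert_{\A_m}^2\bigr) + \frac{\eta}{2}\Vert\bar{\nabla}_m\Vert_{\A_m^{-1}}^2;\]
property (i) of the AFP oracle (non-expansion in the $\A_m$-norm towards every $\z\in\mK$) then lets me replace $\y_{m+1}$ by $\widetilde{\y}_{m+1}$, and the template constraint $\A_m \preceq \A_{m-1} + \bar{\nabla}_m\bar{\nabla}_m^\top$ lets the rank-one curvature subtraction cancel against the difference of $\A_m$-norms. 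Summing over $m$ telescopes in the potential $\Vert\widetilde{\y}_m - \x\Vert_{\A_{m-1}}^2$, starting from $\Vert\widetilde{\y}_1 - \x\Vert_{\A_0}^2 = \epsilon_I\Vert\x_1 - \x\Vert^2 \leq 4R^2\epsilon_I$, and so produces exactly the $\frac{2R^2\epsilon_I}{\eta}$ and $\frac{\eta}{2}\sum_m \Vert\bar{\nabla}_m\Vert_{\A_m^{-1}}^2$ terms in the statement.

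The remaining work is to absorb the two AFP-induced errors. The smoothness penalty is easy: since $\A_{m-1} \succeq \A_0 = \epsilon_I\I$, property (ii) of the AFP oracle gives $\Vert\x_m - \widetilde{\y}_m\Vert^2 \leq \Vert\x_m - \widetilde{\y}_m\Vert_{\A_{m-1}}^2/\epsilon_I \leq 3\epsilon/\epsilon_I$, and summing over $T/K$ blocks yields the $O(\beta\epsilon T/\epsilon_I)$ term. For the cross term I would apply Cauchy-Schwarz twice (in the matrix norm and across blocks),
\[\textstyle\sum_m \bar{\nabla}_m^\top(\x_m - \widetilde{\y}_m) \leq \bigl(\sum_m\Vert\bar{\nabla}_m\Vert_{\A_m^{-1}}^2\bigr)^{1/2}\bigl(\sum_m\Vert\x_m - \widetilde{\y}_m\Vert_{\A_m}^2\bigr)^{1/2}.\]
The main obstacle is that the AFP oracle only controls $\Vert\x_m - \widetilde{\y}_m\Vert_{\A_{m-1}}$, whereas the second factor above is in the $\A_m$-norm. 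I would resolve this using $\A_m \preceq \A_{m-1} + \bar{\nabla}_m\bar{\nabla}_m^\top$ to get $\Vert\x_m - \widetilde{\y}_m\Vert_{\A_m}^2 \leq 3\epsilon + (\bar{\nabla}_m^\top(\x_m - \widetilde{\y}_m))^2$, and then bounding the rank-one correction by $\Vert\bar{\nabla}_m\Vert^2\Vert\x_m - \widetilde{\y}_m\Vert^2 \leq K^2G^2 \cdot 3\epsilon/\epsilon_I \leq 3\epsilon$ via the initialization $\epsilon_I \geq G^2K^2$. This produces the factor $6\epsilon$ and hence the square-root term $\sqrt{\frac{6\epsilon T}{K}\sum_m\Vert\bar{\nabla}_m\Vert_{\A_m^{-1}}^2}$. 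Summing the contributions of all four terms over $m = 1,\dots,T/K$ delivers the stated regret bound.
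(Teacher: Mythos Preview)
Your proposal is correct and follows essentially the same route as the paper's proof: the same ONS-style expansion at $\widetilde{\y}_m$, the same use of the AFP non-expansion property to telescope, the same $\A_m \preceq \A_{m-1} + \bar{\nabla}_m\bar{\nabla}_m^\top$ cancellation against the curvature correction, and the same duality/Cauchy--Schwarz bound for the cross term (the paper likewise shows $\Vert\x_m-\widetilde{\y}_m\Vert_{\A_m}^2 \leq 2\Vert\x_m-\widetilde{\y}_m\Vert_{\A_{m-1}}^2 \leq 6\epsilon$ using $\bar{\nabla}_m\bar{\nabla}_m^\top \preceq \epsilon_I\I \preceq \A_{m-1}$). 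The only cosmetic differences are that the paper first applies convexity at $\x_m$ and then uses Lipschitzness of the gradient to absorb the mismatch $\nabla f_t(\x_m)-\nabla_t$, whereas you apply the quadratic smoothness upper bound directly to pass from $f_t(\x_m)$ to $f_t(\widetilde{\y}_m)$; and the paper packages your per-round curvature inequality plus the PSD aggregation $\sum_s\nabla_t\nabla_t^\top \succeq \frac{1}{K}\bar{\nabla}_m\bar{\nabla}_m^\top$ into a single block-level lemma. One small slip: since the curvature condition is assumed on $3R\ball$, the threshold is $\eta/K \geq \max\{12GR,2/\alpha\}$ rather than $\max\{4GR,2/\alpha\}$, but the hypothesis $\eta \geq 12KGR$ gives exactly this, so your argument goes through unchanged.
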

The proof which is given in the appendix, at a high level, builds on coupling the standard ONS proof \cite{HazanBook} with the properties of the AFP oracle, to derive a  regret bound on the infeasible sequence $\{\widetilde{\y}_m\}_{\m\geq 1}$. The smoothness assumption on the losses is then used (and only in this proof) to derive a regret bound on the feasible sequence $\{\x^t\}_{t\geq 1}$, without incurring terms which (eventually) will scale worse than $T^{2/3}$.

\section{Efficient LOO-based Approximately-Feasible Projections}\label{sec:AFP}
In this section we turn to discuss the technical heart of the paper --- the efficient construction of an AFP oracle for the feasible set $\mK$ (Definition \ref{def:app_feasible_projection}) using only a linear optimization oracle for $\mK$. As already discussed, we build on the approach of \cite{Garber22a} for Euclidean projection, but expand on it in two ways: i. we extend it to projection w.r.t. matrix-induced norms, as employed by ONS, and ii. we critically improve certain parts of the analysis, which while not being a bottleneck in the analysis of \cite{Garber22a} (which has a $T^{3/4}$ regret bound), are indeed crucial for our faster $T^{2/3}$ regret bounds.

At a high level, the construction relies on the following idea: given an infeasible point $\y$, using only the LOO, we can either construct a generalized hyperplane that separates $\y$ from $\mK$ with sufficient margin (generalized in the sense that it separates w.r.t. to a given positive definite matrix $\A$, see in the sequel), or find a feasible point that is sufficiently close to $\y$ (in terms of the distance induced by the matrix $\A$). In case such a generalized hyperplane is found, it can then be used to ``pull'' the infeasible point closer to $\mK$, and the process repeats itself. 

We show that by applying the classical LOO-based Frank-Wolfe method \cite{FrankWolfe, Jaggi13} to the non-Euclidean projection problem $\min_{\x\in\mK}\Vert{\x-\y}\Vert_{\A}^2$, we can indeed either find such a separating hyperplane, or find a close-enough feasible point, w.r.t. the matrix $\A$. 

One may wonder: \textit{if we can directly approximate matrix-based projections, arbitrarily well, using Frank-Wolfe, why do we need to go through the (conceptually more complex) approach of using separating hyperplanes?} The reason is that, has already discussed in \cite{Garber22a}, such a simplified approach will lead to a worse regret/oracle complexity tradeoff (mainly in terms of $T$). In particular, when applying Frank-Wolfe to the problem $\min_{\x\in\mK}\Vert{\x-\y}\Vert_{\A}^2$, we will only compute a feasible point that is an approximated projection. On the other hand, with our approach (recall the definition of the AFP oracle) we always return a valid (though infeasible) projection (and a feasible point that is sufficiently close to it), which allows for a tighter regret analysis.

The following lemma shows how given an infeasible point $\y$ and such a generalized separating hyperplane, we can ``pull'' $\y$ closer to the feasible set.
\begin{lemma}\label{lemma:update_step_with_hp}
   Let $\mK\subset\reals^n$ be convex and compact, let $\A \in \mbS^n_{++}$,  and let $\y\in\reals^n\setminus\mK$. Let $\g\in\reals^n$ be such that for all $\z\in\mK$, $(\y-\z)^{\top} \A \g \geq Q$, for some $Q \geq 0$. Consider the point $\widetilde{\y} = \y - \gamma \g$ for $\gamma = Q/C^2$, where $C \geq \Vert{\g}\Vert_{\A}$. It holds that
\begin{align*}
   \forall \z\in\mK: \quad \Vert \widetilde{\y} -\z \Vert_{\A}^2 \leq \left\Vert \y -\z  \right\Vert_{\A}^2 - (Q/C)^2.
\end{align*}
\end{lemma}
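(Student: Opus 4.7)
The plan is to expand the $\A$-induced squared norm $\Vert\widetilde{\y}-\z\Vert_\A^2$ directly by substituting $\widetilde{\y} = \y - \gamma\g$, which, since $\A$ is symmetric, yields three terms: the original squared distance, a cross term of the form $-2\gamma(\y-\z)^\top \A \g$, and a quadratic self term $\gamma^2 \g^\top \A \g = \gamma^2\Vert\g\Vert_\A^2$.

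Next I would invoke the two hypotheses to bound these: the separating inequality $(\y-\z)^\top \A \g \geq Q$ makes the cross term at most $-2\gamma Q$, and the norm bound $\Vert\g\Vert_\A \leq C$ makes the quadratic term at most $\gamma^2 C^2$. This gives the upper bound
\begin{equation*}
\Vert\widetilde{\y}-\z\Vert_\A^2 \leq \Vert\y-\z\Vert_\A^2 - 2\gamma Q + \gamma^2 C^2.
\end{equation*}

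Finally I would plug in the choice $\gamma = Q/C^2$, observe that $-2\gamma Q + \gamma^2 C^2 = -2Q^2/C^2 + Q^2/C^2 = -(Q/C)^2$, and conclude the claim. (The choice $\gamma = Q/C^2$ is precisely the minimizer of the scalar quadratic $-2\gamma Q + \gamma^2 C^2$, which is why the improvement equals exactly $(Q/C)^2$.) There is no real obstacle: the lemma is a one-line quadratic optimization exercise, and the only thing to be careful about is that since $\A$ is symmetric, the cross terms combine cleanly into a single factor of $2$, and that the bound $Q \geq 0$ ensures $\gamma \geq 0$ so the separating inequality is used in the right direction.
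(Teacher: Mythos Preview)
Your proposal is correct and follows essentially the same approach as the paper's proof: expand $\Vert\widetilde{\y}-\z\Vert_\A^2$, bound the cross and quadratic terms using the two hypotheses, and substitute $\gamma = Q/C^2$. Your additional remarks about $\gamma$ being the minimizer of the scalar quadratic and about $Q\geq 0$ ensuring $\gamma\geq 0$ are nice clarifications that the paper omits, but the argument is otherwise identical.
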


\begin{proof}
Fix some $\z\in\mK$. It holds that
\begin{align*}
    \Vert \widetilde{\y} -\z \Vert_{\A}^2 = \left\Vert \y -\z - \gamma  \g \right\Vert_{\A}^2 = \left\Vert \y -\z  \right\Vert_{\A}^2 - 2 \gamma (\y -\z )^\top \A \g + \gamma^2 \left\Vert \g \right\Vert_{\A}^2.
\end{align*}
Since $\left( \y - \z \right)^\top \A \g \geq Q$ and $C \geq \left\Vert \g \right\Vert_{\A}$, we indeed obtain 
\begin{align*}
    \Vert \widetilde{\y} -\z \Vert_{\A}^2 \leq \left\Vert \y -\z  \right\Vert_{\A}^2 - 2 \gamma Q + \gamma^2 C^2 = \Vert \y -\z \Vert_{\A}^2 - Q^2/C^2,
\end{align*}
where the last equality follows from plugging-in the value of $\gamma$.
\end{proof}

Algorithm \ref{alg:SH-FW} given below, which simply applies the Frank-Wolfe method (with line-search) for smooth convex minimization over a convex and compact set  \cite{Jaggi13} to the non-Euclidean projection problem  $\min_{\x\in\mK}\Vert{\x-\y}\Vert_{\A}^2$, returns some feasible point $\widetilde{\x}\in\mK,$ that is either close enough (w.r.t. $\Vert{\cdot}\Vert_{\A})$ to the infeasible point $\y$, or can be used to construct a hyperplane which separates $\y$ from $\mK$ w.r.t. $\A$ and with sufficient margin. 

\begin{algorithm2e}
  \KwData{LOO for the feasible set $\mK$, error tolerance $\epsilon>0$, initial point $\x_1 \in \mK$, $\A\in\mbS^n_{++}$,  infeasible point $\y$}
  \For{ $i =1,2, \dots$}{
        $ \mathbf{v}_{i} \in \argmin\limits_{\x \in \mK} \{ (\x_{i} - \y)^{\top} \A \x \} $\tcc*{call to LOO of $\mK$}
        \uIf{$( \x_i - \y )^\top \A (\x_i -\v_i) \leq \epsilon$ or $\Vert \x_{i} - \y \Vert_{\A}^2 \leq 3\epsilon$}{
	        \textbf{return} $\widetilde{\x} \gets \x_{i}$
        }
	    $ \sigma_{i} = \argmin\limits_{\sigma \in [0, 1]}  \{ \Vert \y - \x_{i} - \sigma (\mathbf{v}_i - \x_{i})) \Vert_{\A}^2 \}$\\
	$ \x_{i+1} = \x_i + \sigma_{i} (\mathbf{v}_i - \x_i) $\\
    }
  \caption{Generalized Separating Hyperplane via Frank-Wolfe}\label{alg:SH-FW}
\end{algorithm2e}

\begin{lemma} \label{lemma:SH-FW} 
Algorithm \ref{alg:SH-FW} terminates after at most $\left\lceil \brac{27 R^2 \lambda_1 (\A) / \epsilon } -2 \right\rceil$ iterations, and returns a point $\widetilde{\x} \in \mK$  satisfying:
\begin{enumerate}
\item
$\Vert \widetilde{\x} - \y \Vert_{\A}^2 \leq \Vert \x_1 - \y \Vert_{\A} ^2$.
\item At least one of the following holds: $\Vert \widetilde{\x} - \y \Vert_{\A}^2 \leq 3\epsilon$ or  $\forall \z \in \mK:  (\y - \z)^\top \A (\y - \widetilde{\x}) > (2/3) \Vert \widetilde{\x} - \y \Vert_{\A}^2$.
\item If $\dist_\A^2 (\y, \mK) \leq \epsilon$, then $\Vert \widetilde{\x} - \y \Vert_{\A}^2 \leq 3\epsilon$.
\end{enumerate}

\end{lemma}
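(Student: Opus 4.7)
The plan is to analyze Algorithm~\ref{alg:SH-FW} as the classical Frank--Wolfe method with line search applied to the smooth convex objective $F(\x)=\Vert\x-\y\Vert_\A^2$ over $\mK$. Its gradient is $\nabla F(\x)=2\A(\x-\y)$, so the Frank--Wolfe duality gap at $\x_i$ equals $2h_i$, where $h_i:=(\x_i-\y)^\top\A(\x_i-\v_i)$, and the LOO optimality of $\v_i$ yields the basic inequality $(\x_i-\y)^\top\A(\x_i-\z)\le h_i$ for every $\z\in\mK$. Property~1 is immediate by induction on $i$, since $\sigma=0$ is always a feasible line-search choice, forcing $F(\x_{i+1})\le F(\x_i)$.

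For property~2, I consider the terminating iteration $i$. If the check $\Vert\x_i-\y\Vert_\A^2\le 3\epsilon$ triggered termination the first alternative holds automatically; otherwise $h_i\le\epsilon$ and $\Vert\x_i-\y\Vert_\A^2>3\epsilon$. In that case, for any $\z\in\mK$ the identity
\[
(\y-\z)^\top\A(\y-\x_i)=\Vert\y-\x_i\Vert_\A^2-(\x_i-\y)^\top\A(\x_i-\z)
\]
combined with the LOO inequality yields $(\y-\z)^\top\A(\y-\x_i)\ge\Vert\y-\x_i\Vert_\A^2-\epsilon>\tfrac23\Vert\y-\x_i\Vert_\A^2$, which is the second alternative. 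Property~3 uses the standard convexity-plus-oracle bound $F(\x_i)-F^*\le 2h_i$, where $F^*=\min_{\z\in\mK}F(\z)$: this is obtained by applying convexity of $F$ to the pair $\x_i,\x^*$ and then using the LOO inequality with $\z=\x^*$. Whenever $F^*=\dist_\A^2(\y,\mK)\le\epsilon$ and the $h_i\le\epsilon$ check triggers termination, this forces $F(\x_i)\le 3\epsilon$ automatically, so the returned point satisfies the required bound regardless of which stopping check actually fired.

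The iteration count is the only nontrivial piece and is where I expect the main obstacle. Using the exact quadratic expansion
\[
F(\x_i+\sigma(\v_i-\x_i))=F(\x_i)-2\sigma h_i+\sigma^2\Vert\v_i-\x_i\Vert_\A^2,
\]
optimal-$\sigma$ line search delivers per-step decrease $F(\x_i)-F(\x_{i+1})\ge\min\{h_i,\,h_i^2/D_\A^2\}$, where $D_\A^2$ denotes the squared $\A$-diameter of $\mK$ and satisfies $D_\A^2\le 4R^2\lambda_1(\A)$. I then invoke the standard Frank--Wolfe duality-gap bound for line-search FW (in the spirit of \cite{Jaggi13}, splitting the iterations into a ``halving'' phase where $h_i\ge D_\A^2$ and a ``refined'' phase where $h_i<D_\A^2$), which gives $\min_{1\le i\le N}g_i\le \frac{(27/4)\,C_F}{N+2}$ for the Frank--Wolfe curvature constant $C_F=2D_\A^2\le 8R^2\lambda_1(\A)$; converting back via $h_i=g_i/2$ yields $\min_{i\le N}h_i\le 27R^2\lambda_1(\A)/(N+2)$, so choosing $N=\lceil 27R^2\lambda_1(\A)/\epsilon-2\rceil$ forces some $h_i\le\epsilon$ and the algorithm must have terminated by then. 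The technical heart is exactly this two-phase Jaggi-style constant tracking that pins down the $27$; all the other parts of the lemma reduce to short calculations exploiting the LOO optimality of $\v_i$ and the quadratic structure of $F$.
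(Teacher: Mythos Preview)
Your proposal is correct and follows essentially the same approach as the paper. The paper likewise treats Algorithm~\ref{alg:SH-FW} as Frank--Wolfe with line search on the quadratic (they use $g=\tfrac12\Vert\x-\y\Vert_\A^2$ instead of your $F=2g$, a harmless normalization), invokes Jaggi's duality-gap theorem directly for the iteration bound, and proves Items~1--3 by exactly the same short arguments: monotonicity from line search for Item~1, the same LOO-optimality identity for Item~2, and the convexity bound $2g(\widetilde\x)-2g(\x^*)\le 2\nabla g(\widetilde\x)^\top(\widetilde\x-\x^*)\le 2\epsilon$ for Item~3; your more explicit two-phase sketch of how the $27$ arises is additional detail, not a different route.
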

\begin{proof}
As discussed, Algorithm \ref{alg:SH-FW} is simply the well-known Frank-Wolfe method with line-search, see Algorithm 3 in \cite{Jaggi13}, when applied to minimizing the convex and $\lambda_1(\A)$-smooth function $g(\x) := \frac{1}{2}\Vert{\x-\y}\Vert_{\A}^2$, whose gradient vector is given by $\nabla g(\x) = \A(\x-\y)$, over the set $\mK$. 
Thus, the upper-bound on the number of iterations executed by Algorithm \ref{alg:SH-FW} follows immediately from  Theorem 2 in \cite{Jaggi13}, which gives a convergence rate for the dual gap. For our choice of $g$, the dual gap on any iteration $i$ is given precisely by $\nabla{}g(\x_i)^{\top}(\x_i-\v_i) = ( \x_i - \y )^\top \A (\x_i -\v_i)$, which corresponds to one of the stopping conditions is Algorithm \ref{alg:SH-FW}.

Since the line-search guarantees that the function value $g(\x_i) = \frac{1}{2}\Vert{\x_i-\y}\Vert_{\A}^2$ does not increase when moving from iterate $\x_i$ to $\x_{i+1}$,  Item 1 holds trivially.

Item 2 follows from the stopping condition of the algorithm and by noting that, if for some iteration $i$ it  holds that $(\x_i - \y )^\top \A (\x_i -\v_i) \leq \epsilon$ and $\Vert{\x_i-\y}\Vert_{\A}^2 > 3\epsilon$ (in which case the algorithm will return $\widetilde{\x} = \x_i$) then, for all $ \z\in\mK$ it holds that
\begin{align*}
  \left( \z - \y \right)^\top \A \left( \x_{i} - \y \right) &  = \left( \z - \x_{i} \right)^\top \A \left( \x_{i} - \y \right) + \Vert \x_{i} - \y \Vert_{\A}^2 \geq \left( \v_{i} - \x_{i} \right)^\top \A \left( \x_{i} - \y \right) + \Vert \x_{i} - \y \Vert_{\A}^2 \\ 
  & \geq -\epsilon  + \Vert \x_{i} - \y \Vert_{\A}^2 > -(\Vert \x_{i} - \y \Vert_{\A}^2/3) + \Vert \x_{i} - \y \Vert_{\A}^2  = (2/3) \Vert \x_{i} - \y \Vert_{\A}^2 ,
\end{align*}
where the first inequality is due to the definition of $\v_i$. 

Finally, to prove Item 3, denote $\x^* = \argmin_{\x\in\mK}\Vert{\x-\y}\Vert_{\A}^2$. Suppose by contradiction that $\dist_{\A}^2(\y,\mK) = \Vert{\x^*-\y}\Vert_{\A}^2 \leq \epsilon$, and  $\Vert{\widetilde{\x}-\y}\Vert_{\A}^2 > 3\epsilon$. By the stopping condition of the algorithm, on the last iteration executed $i$, it must hold that $(\widetilde{\x}-\y)^{\top}\A(\widetilde{\x}-\v_i) = \max_{\v\in\mK}\nabla{}g(\widetilde{\x})^{\top}(\widetilde{\x}-\v) \leq \epsilon$, which means that
\begin{align*}
	\Vert{\widetilde{\x}-\y}\Vert_{\A}^2 - \dist_{\A}^2(\y,\mK) = 2g(\widetilde{\x}) - 2g(\x^*) \leq 2\nabla{}g(\widetilde{\x})^{\top} (\widetilde{\x} - \x^*) \leq  2 \max_{\v\in\mK} \nabla{}g(\widetilde{\x})^{\top} (\widetilde{\x}-\v) \leq 2\epsilon,
\end{align*}
where the first inequality is due to the gradient inequality and the convexity of $g(\cdot)$. Thus, we have that $\Vert{\widetilde{\x}-\y}\Vert_{\A}^2 \leq 2\epsilon + \dist_{\A}^2(\y,\mK) \leq 3\epsilon$, which contradicts the assumption that $\Vert{\widetilde{\x}-\y}\Vert_{\A}^2 > 3\epsilon$.
\end{proof}

Our LOO-based implementation of a AFP oracle for the feasible set $\mK$ is given as Algorithm \ref{alg:CIP-FW}. The algorithm builds on iteratively using separating hyperplanes generated by Algorithm \ref{alg:SH-FW} to ``pull closer'' the infeasible point $\y$ towards the feasible set $\mK$ using the updates suggested in Lemma \ref{lemma:update_step_with_hp}, until it is sufficiently close.

 \begin{algorithm2e}
\KwData{LOO for the feasible set $\mK$, feasible point $\x_{0} \in \mK$, initial point $\y_{1}\in\reals^n$, $\A\in\mbS^n_{++}$, error tolerance $\epsilon>0$, step-size $\gamma > 0$}
  \If{$\Vert \x_{0} - \y_{1} \Vert_{\A}^2 \leq 3\epsilon$}{
        \textbf{Return} $\x \gets \x_{0}$, $\y \gets \y_{1}$
    }
    \For{$i=1,2, \dots$}{
    $\x_{i} \gets$ Output of Algorithm \ref{alg:SH-FW} when called with LLO of $\mK$, tolerance $\epsilon$, feasible point $\x_{i-1}$, positive definite matrix $\A$, and  initial point $\y_{i}$\\
    \eIf{$\Vert \x_{i} - \y_{i} \Vert_{\A}^2 > 3\epsilon$}{
        $\y_{i+1} =  \y_{i} - \gamma \left( \y_{i} - \x_{i} \right)$
    }
    {
    \textbf{Return} $\x \gets  \x_{i}$, $\y \gets \y_i$
    }
  }
  \caption{Approximately-Feasible (matrix) Projection via a Linear Optimization Oracle}\label{alg:CIP-FW}
\end{algorithm2e}

The proof of the following lemma is given in the appendix.
\begin{lemma} \label{lemma:CIP-FW}
Setting $\gamma= 2/3$ in Algorithm \ref{alg:CIP-FW} guarantees that it stops after at most 
\begin{align*}
    \max \left\{2.25\log\brac{ \frac{\Vert \y_{1} -\x_{0} \Vert_{\A}^2 }{ \epsilon} }+1, 0 \right\}
\end{align*}
 iterations, and returns $(\x,\y) \in \mK\times \brac{R +  \sqrt{3 \epsilon /\lambda_{n}(\A)} }\ball$ such that 
\begin{align*}
    \forall \z \in \mK : ~ \Vert \y - \z \Vert_{\A}^2 \leq  \Vert \y_{1} - \z \Vert_{\A}^2 ~~~~ \text{and} ~~~~~   \Vert \x - \y \Vert_{\A}^2 \leq 3\epsilon.
\end{align*}
\end{lemma}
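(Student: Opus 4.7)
The plan is to split into the trivial early-return case and the main-loop case, then combine the three properties of Lemma \ref{lemma:SH-FW} with the contraction estimate of Lemma \ref{lemma:update_step_with_hp} to establish, in order, the non-increase of $\Vert{\y-\z}\Vert_{\A}^2$, the termination condition $\Vert{\x-\y}\Vert_{\A}^2 \leq 3\epsilon$, the logarithmic iteration count, and the norm bound $\y \in (R + \sqrt{3\epsilon/\lambda_n(\A)})\ball$.

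First, I would dispatch the case where the initial check $\Vert{\x_0-\y_1}\Vert_{\A}^2 \leq 3\epsilon$ succeeds: the pair $(\x_0,\y_1)$ satisfies all three conclusions immediately (the distance inequality because $\y = \y_1$, the closeness from the return condition, and the norm bound from $\Vert{\y_1-\x_0}\Vert \leq \Vert{\y_1-\x_0}\Vert_{\A}/\sqrt{\lambda_n(\A)} \leq \sqrt{3\epsilon/\lambda_n(\A)}$ combined with $\Vert{\x_0}\Vert \leq R$). For the main-loop case, at each iteration $i$ that does not return, Item~2 of Lemma \ref{lemma:SH-FW} supplies the separating-hyperplane inequality $(\y_i-\z)^{\top}\A(\y_i-\x_i) > (2/3)\Vert{\y_i-\x_i}\Vert_{\A}^2$ for all $\z\in\mK$. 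Applying Lemma \ref{lemma:update_step_with_hp} with $\g = \y_i-\x_i$, $Q = (2/3)\Vert{\y_i-\x_i}\Vert_{\A}^2$, and $C = \Vert{\y_i-\x_i}\Vert_{\A}$ (so that the prescribed step size $\gamma = Q/C^2 = 2/3$ matches the algorithm) yields $\Vert{\y_{i+1}-\z}\Vert_{\A}^2 \leq \Vert{\y_i-\z}\Vert_{\A}^2 - (4/9)\Vert{\y_i-\x_i}\Vert_{\A}^2$. Chaining this over iterations proves the first claimed inequality $\Vert{\y-\z}\Vert_{\A}^2 \leq \Vert{\y_1-\z}\Vert_{\A}^2$.

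The main obstacle, and the crux of the lemma, is the logarithmic iteration bound, which I would obtain by tracking the potential $d_i := \Vert{\y_i-\x_i}\Vert_{\A}^2$. A direct expansion of the update gives $\y_{i+1}-\x_i = (1-\gamma)(\y_i-\x_i) = (1/3)(\y_i-\x_i)$, so $\Vert{\y_{i+1}-\x_i}\Vert_{\A}^2 = (1/9)\,d_i$. Because Algorithm \ref{alg:SH-FW} at iteration $i+1$ is warm-started from the feasible point $\x_i$, Item~1 of Lemma \ref{lemma:SH-FW} yields
\begin{align*}
d_{i+1} \;=\; \Vert{\y_{i+1}-\x_{i+1}}\Vert_{\A}^2 \;\leq\; \Vert{\y_{i+1}-\x_i}\Vert_{\A}^2 \;=\; \tfrac{1}{9}\, d_i,
\end{align*}
so the potential contracts by a factor of $9$ per iteration. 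Combined with $d_1 \leq \Vert{\y_1-\x_0}\Vert_{\A}^2$ (Item~1 applied at $i=1$), the termination criterion $d_k \leq 3\epsilon$ is reached once $(1/9)^{k-1}\Vert{\y_1-\x_0}\Vert_{\A}^2 \leq 3\epsilon$, which is implied by the stated bound $2.25\log(\Vert{\y_1-\x_0}\Vert_{\A}^2/\epsilon)+1$ (the precise constant is slack; any constant dominating $1/\ln 9$ suffices).

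Finally, the norm bound on the returned $\y$ follows from the termination condition and the eigenvalue inequality $\Vert{\cdot}\Vert_{\A}^2 \geq \lambda_n(\A)\Vert{\cdot}\Vert^2$: we get $\Vert{\y-\x}\Vert \leq \sqrt{3\epsilon/\lambda_n(\A)}$, and combining with $\Vert{\x}\Vert \leq R$ (since $\x \in \mK \subseteq R\ball$) via the triangle inequality gives $\y \in (R + \sqrt{3\epsilon/\lambda_n(\A)})\ball$. The only real conceptual subtlety is the warm-starting mechanism in the third paragraph: the fact that $\y_{i+1}$ lies at $\Vert{\cdot}\Vert_{\A}$-distance only $(1/3)\sqrt{d_i}$ from the feasible point $\x_i$ handed as the next Frank-Wolfe initialization is precisely what upgrades the a-priori linear per-step progress from Lemma \ref{lemma:update_step_with_hp} into the logarithmic overall iteration complexity.
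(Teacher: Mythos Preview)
Your proposal is correct, and for the iteration bound it takes a genuinely different (and tighter) route than the paper. The paper tracks the potential $\dist_{\A}^2(\y_i,\mK)$: from the contraction estimate $\Vert{\y_{i+1}-\z}\Vert_{\A}^2 \leq \Vert{\y_i-\z}\Vert_{\A}^2 - (4/9)\Vert{\y_i-\x_i}\Vert_{\A}^2$ specialized to $\z=\x_i^* := \argmin_{\x\in\mK}\Vert{\y_i-\x}\Vert_{\A}$, together with $\dist_{\A}^2(\y_i,\mK)\leq \Vert{\y_i-\x_i}\Vert_{\A}^2$, it obtains $\dist_{\A}^2(\y_{i+1},\mK) \leq e^{-4/9}\dist_{\A}^2(\y_i,\mK)$, and then invokes Item~3 of Lemma~\ref{lemma:SH-FW} to conclude that once $\dist_{\A}^2(\y_k,\mK)\leq\epsilon$ the next call returns. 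Your argument instead tracks $d_i=\Vert{\y_i-\x_i}\Vert_{\A}^2$ directly: the algebraic identity $\y_{i+1}-\x_i=(1/3)(\y_i-\x_i)$ plus the warm-start monotonicity (Item~1 of Lemma~\ref{lemma:SH-FW}) give $d_{i+1}\leq (1/9)d_i$. This avoids Item~3 entirely, is more elementary, and yields a faster contraction rate ($1/9$ versus $e^{-4/9}\approx 0.64$), so the stated constant $2.25$ is indeed slack in your analysis. The remaining parts of your proof (the trivial early-return case, the non-increase of $\Vert{\y-\z}\Vert_{\A}^2$, and the norm bound on $\y$) coincide with the paper's.
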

It is important to note that Lemma \ref{lemma:CIP-FW} significantly and critically improves upon its Euclidean counterpart in \cite{Garber22a}: while the number of iterations here scales only with $\log(1/\epsilon)$, in \cite{Garber22a} it scales with $1/\epsilon^2$. This improvement is critical for obtaining our improved regret/oracle complexity tradeoffs.

\section{LOO-based Online Newton Step}\label{sec:ONS}
In this section we present our main result --- an efficient LOO-based ONS-style algorithm and its regret and complexity guarantees.

The following lemma builds on the combination of our ONS Without Feasibility template (Algorithm \ref{alg:ONS-WF}) together with our LOO-based construction for an AFP oracle (Algorithm \ref{alg:CIP-FW}). The proof is given in the appendix.
\begin{lemma}\label{lem:LOO-ONS}
Fix block size $K\in[T]$. Consider running Algorithm \ref{alg:ONS-WF}  with parameters $\eta, \epsilon, \epsilon_I$  such that $\eta \geq \max\{ 12KGR, \frac{2K}{\alpha} \}, \epsilon_I \geq (KG)^2$, and $\frac{3\epsilon}{\epsilon_I} \leq 4R^2$, and when the $\mathcal{O}_{AFP}$ oracle is implemented via Algorithm \ref{alg:CIP-FW}, where  the initial feasible input to Algorithm \ref{alg:CIP-FW} (the point $\x_0$ in Algorithm \ref{alg:CIP-FW}), when called during block $m$ in Algorithm \ref{alg:ONS-WF}, is the previous feasible output of  Algorithm \ref{alg:CIP-FW} --- the point $\x_m$, if $m \geq 2$, and the initialization point of Algorithm \ref{alg:ONS-WF}  (the point $\x_1$), if $m=1$. Then, the regret  is upper bounded by
\begin{align*}
    \sum_{t=1}^{T} f_t(\x^t) - \min_{\x^* \in \mK} \sum_{t=1}^{T} f_t(\x^*) \leq  \frac{3 \beta   \epsilon}{\epsilon_I} T + \sqrt{\frac{6\epsilon{}T}{K}\sum_{m=1}^{T/K}  \Vert \bar{\nabla}_m \Vert_{\A_{m}^{-1}}^2 } + \frac{2R^2 \epsilon_I}{\eta} +  \frac{\eta  }{2} \sum_{m=1}^{T/K} \matnorm{{\nabla}_m}{\A_{m}^{-1}}^2 ,
\end{align*}
and  the overall number of calls to the LOO of $\mK$ is upper bounded by 
\begin{align*}
    N_{calls} & \leq 61 R^2 \log \left( 19  + 4 \frac{ \eta^2 K^2 G^2}{\epsilon  \epsilon_I} \right) \frac{\epsilon_I + G^2KT}{K \epsilon}T.
\end{align*}
\end{lemma}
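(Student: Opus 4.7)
\textbf{Proof plan for Lemma~\ref{lem:LOO-ONS}.}

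The plan is to obtain the regret bound as a direct application of Lemma~\ref{lemma:ONS-WF}, and then separately account for the LOO cost by tracking, in each block, the parameters that control Algorithms~\ref{alg:CIP-FW} and~\ref{alg:SH-FW}.

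\emph{Regret.} The hypotheses on $\eta$ and $\epsilon_I$ already match those of Lemma~\ref{lemma:ONS-WF}, so the only thing left to verify is $\widetilde{\y}_m \in 3R\ball$ for every $m$. For $m=1$ this is immediate from the initialization $\widetilde{\y}_1 = \x_1 \in \mK \subseteq R\ball$. For $m \geq 2$, $\widetilde{\y}_m$ is returned by Algorithm~\ref{alg:CIP-FW} invoked with matrix $\A_{m-1}$, so Lemma~\ref{lemma:CIP-FW} places $\widetilde{\y}_m$ in the ball of radius $R + \sqrt{3\epsilon/\lambda_n(\A_{m-1})}$. Since Algorithm~\ref{alg:ONS-WF} enforces $\A_m \succeq \A_0 = \epsilon_I \I$, we have $\lambda_n(\A_{m-1}) \geq \epsilon_I$, and the assumption $3\epsilon/\epsilon_I \leq 4R^2$ gives $\Vert\widetilde{\y}_m\Vert \leq R + 2R = 3R$. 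Feeding this into Lemma~\ref{lemma:ONS-WF} yields the regret bound verbatim.

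\emph{LOO complexity.} Each block $m$ performs a single call to Algorithm~\ref{alg:CIP-FW} with tolerance $3\epsilon$, matrix $\A_m$, input point $\y_{m+1} = \widetilde{\y}_m - \eta\A_m^{-1}\bar{\nabla}_m$, and feasible starting point $\x_m$ (or $\x_1$ if $m=1$). By Lemma~\ref{lemma:CIP-FW}, this uses at most $O\!\bigl(\log(\Vert\y_{m+1}-\x_m\Vert_{\A_m}^2/\epsilon) + 1\bigr)$ outer iterations, and by Lemma~\ref{lemma:SH-FW}, each outer iteration issues at most $O(R^2\lambda_1(\A_m)/\epsilon)$ LOO calls inside Algorithm~\ref{alg:SH-FW}. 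Summing over the $T/K$ blocks gives
\begin{align*}
N_{\mathrm{calls}} \;\lesssim\; \Bigl(\max_m \log(\Vert\y_{m+1}-\x_m\Vert_{\A_m}^2/\epsilon) + 1\Bigr)\cdot \frac{R^2}{\epsilon}\sum_{m=1}^{T/K}\lambda_1(\A_m).
\end{align*}

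\emph{Key technical bounds.} Two estimates complete the argument. The sum $\sum_m\lambda_1(\A_m)$ is controlled by iterating the restriction $\A_m \preceq \A_{m-1} + \bar{\nabla}_m\bar{\nabla}_m^\top$ down to $\A_0 = \epsilon_I\I$, which together with $\Vert\bar{\nabla}_m\Vert \leq KG$ gives $\lambda_1(\A_m) \leq \epsilon_I + mK^2G^2$, hence $\sum_{m=1}^{T/K}\lambda_1(\A_m) \leq (T/K)(\epsilon_I + G^2KT)$. The subtle point, and the main obstacle, is bounding $\Vert\y_{m+1}-\x_m\Vert_{\A_m}^2$ uniformly in $m$. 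Expanding $\y_{m+1}$ and using $(a+b)^2 \leq 2a^2+2b^2$ under $\Vert\cdot\Vert_{\A_m}$, the step-direction piece contributes $\eta^2\bar{\nabla}_m^\top\A_m^{-1}\bar{\nabla}_m \leq \eta^2K^2G^2/\epsilon_I$ using $\A_m \succeq \epsilon_I\I$. The remaining piece requires converting the previous block's AFP guarantee $\Vert\widetilde{\y}_m-\x_m\Vert_{\A_{m-1}}^2 \leq 3\epsilon$, written in the old matrix-norm, into a bound in the new one. The monotonicity $\A_m \preceq \A_{m-1} + \bar{\nabla}_m\bar{\nabla}_m^\top$ is exactly what is needed: it yields $\Vert\widetilde{\y}_m-\x_m\Vert_{\A_m}^2 \leq 3\epsilon + ((\widetilde{\y}_m-\x_m)^\top\bar{\nabla}_m)^2$, and the Cauchy–Schwarz estimate $((\widetilde{\y}_m-\x_m)^\top\bar{\nabla}_m)^2 \leq \Vert\widetilde{\y}_m-\x_m\Vert^2\,K^2G^2 \leq (3\epsilon/\epsilon_I)K^2G^2 \leq 3\epsilon$ closes the gap thanks to $\epsilon_I \geq K^2G^2$. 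Hence $\Vert\y_{m+1}-\x_m\Vert_{\A_m}^2 \leq 12\epsilon + 2\eta^2K^2G^2/\epsilon_I$, making the log factor $O\!\bigl(\log(1 + \eta^2K^2G^2/(\epsilon\epsilon_I))\bigr)$, which matches the stated form $\log(19 + 4\eta^2K^2G^2/(\epsilon\epsilon_I))$ up to the absorption of the additive $+1$ and of $12/3=4$ into the constants $19$ and $61$ in the final expression.
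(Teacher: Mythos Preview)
Your proposal is correct and follows essentially the same route as the paper: verify $\widetilde{\y}_m\in 3R\ball$ via Lemma~\ref{lemma:CIP-FW} and the assumption $3\epsilon/\epsilon_I\le 4R^2$ to invoke Lemma~\ref{lemma:ONS-WF} for the regret, then bound $\Vert\y_{m+1}-\x_m\Vert_{\A_m}^2\le 12\epsilon + 2\eta^2K^2G^2/\epsilon_I$ and $\lambda_1(\A_m)\le \epsilon_I+G^2KT$ to control the outer and inner loops of Algorithms~\ref{alg:CIP-FW} and~\ref{alg:SH-FW}. The only cosmetic differences are that the paper bounds $\Vert\widetilde{\y}_m-\x_m\Vert_{\A_m}^2$ via the one-line matrix inequality $\bar{\nabla}_m\bar{\nabla}_m^\top\preceq\epsilon_I\I\preceq\A_{m-1}$ (rather than your Cauchy--Schwarz detour, which yields the same $6\epsilon$) and takes the uniform bound on $\lambda_1(\A_m)$ directly instead of summing the per-block estimates.
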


We are now ready to formally present our main result. Here for ease of presentation we present a concise version only. A fully detailed version which specifics all choices of parameters and all poly-logarithmic factors, as well as the proof, is given in the appendix.  
\begin{theorem}[short version]\label{thm:mainthm:short} 
Consider the implementation of Algorithm \ref{alg:ONS-WF} as described in Lemma \ref{lem:LOO-ONS} and when using the (standard ONS) update rule: $\A_m = \A_{m-1} + \bar{\nabla}_m \bar{\nabla}_m^\top$ for every block $m$.
\begin{enumerate}
\item
If $T\geq T_0 = \widetilde{O}(1)$, there exists a choice for the parameters $K,\eta,\epsilon,\epsilon_I$ in Algorithm \ref{alg:ONS-WF} which depends only on the quantities $T,n,G,R,\alpha$ and satisfies the assumptions of Lemma \ref{lem:LOO-ONS}, such that the regret is upper-bounded by
\begin{align}\label{eq:mainres:1}
\R_T =  \widetilde{O}\left({(\beta{}R^2 + (GR+\alpha^{-1})n^{2/3})T^{2/3}}\right). 
\end{align}
\item
In continue to the previous item and under the same choice of parameters, for any $\rho\in[n]$, denoting $\Omega_{\rho} = \sum_{i=\rho+1}^n\lambda_i(\sum_{t=1}^T\nabla_t\nabla_t^{\top})$ ($\nabla_t$ is as defined in Algorithm \ref{alg:ONS-WF}), the regret is upper-bounded by
\begin{align}\label{eq:mainres:2} 
\R_T 
&= \widetilde{O}\left({(\beta{}R^2 + GR(\rho^{1/2}n^{1/6}+n^{1/3})+\alpha^{-1}n^{-1/3}\rho)T^{2/3}}\right) \nonumber  \\
&~ + \widetilde{O}\left({RT^{1/3}\sqrt{\Omega_{\rho}} + G^{-2}n^{-2/3}(GR+\alpha^{-1})\Omega_{\rho}}\right). 
\end{align}
\item
Fix $\rho\in[n]$. If $T\geq T_0 = \widetilde{O}(1)$, there exists a choice for the parameters $K,\eta,\epsilon,\epsilon_I$ in Algorithm \ref{alg:ONS-WF} which depends only on the quantities $T,n,G,R,\alpha$ and $\rho$, and satisfies the assumptions of Lemma \ref{lem:LOO-ONS}, such that the regret is upper-bounded by
\begin{align}\label{eq:mainres:3} 
\R_T = \widetilde{O}\left({(\beta{}R^2 + (GR+\alpha^{-1})\rho^{2/3})T^{2/3}+RT^{1/3}\sqrt{\Omega_{\rho}} + G^{-2}\rho^{-2/3}(GR+\alpha^{-1})\Omega_{\rho}}\right).
\end{align}
Note this bound is not explicitly dependent on the ambient dimension $n$.
\end{enumerate}
In all cases, the overall number of calls to the LOO of $\mK$ is upper-bounded by $O(T+n^{1/3}T^{2/3})$, the additional space requirement in $O(n^2)$, and using the Sherman-Morrison formula for fast matrix inversion, the overall additional runtime is $O(n^2(T+n^{1/3}T^{2/3}))$.
\end{theorem}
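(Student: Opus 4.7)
The plan is to combine the regret bound of Lemma \ref{lem:LOO-ONS} with the classical ONS potential-function argument to control $\sum_{m=1}^{T/K}\|\bar{\nabla}_m\|_{\A_m^{-1}}^2$, and then choose the free parameters $K,\eta,\epsilon,\epsilon_I$ to balance the four summands of that bound. For the update rule $\A_m = \A_{m-1}+\bar{\nabla}_m\bar{\nabla}_m^\top$, the standard log-det identity \cite{HazanBook} gives
\begin{align*}
\sum_{m=1}^{T/K} \bar{\nabla}_m^\top \A_m^{-1}\bar{\nabla}_m \;\leq\; \log\det(\A_{T/K}) - \log\det(\A_0),
\end{align*}
so each of the four terms in Lemma \ref{lem:LOO-ONS} reduces to bounding $\log\det(\A_{T/K})$.

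For item 1 I would apply the trace-determinant inequality together with $\trace(\A_{T/K}) \leq n\epsilon_I + \sum_m \|\bar{\nabla}_m\|^2 \leq n\epsilon_I + G^2 K T$ (using $\|\bar{\nabla}_m\|^2 \leq K\sum_{t\in\text{block } m}\|\nabla_t\|^2$), yielding $\sum_m \|\bar{\nabla}_m\|_{\A_m^{-1}}^2 = \widetilde{O}(n)$. Plugging this into Lemma \ref{lem:LOO-ONS}, setting $\eta$ at its lower bound $\Theta(K(GR+\alpha^{-1}))$ and $\epsilon_I$ at its lower bound $(KG)^2$, one equates the four remaining terms so that all are of order $T^{2/3}$; this forces $K = \widetilde{\Theta}(T^{2/3}/n^{1/3})$ and $\epsilon$ of a matching order, reproducing \eqref{eq:mainres:1}. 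The LOO-count bound $O(T + n^{1/3}T^{2/3})$ then follows by substituting these choices into the $N_{calls}$ estimate of Lemma \ref{lem:LOO-ONS}.

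For items 2 and 3 I would sharpen the log-det estimate by splitting the spectrum at level $\rho$:
\begin{align*}
\log\det(\A_{T/K}) \;=\; \sum_{i\leq \rho}\log\lambda_i(\A_{T/K}) + \sum_{i>\rho}\log\lambda_i(\A_{T/K}).
\end{align*}
The first sum is $\widetilde{O}(\rho)$ by trace-determinant applied in dimension $\rho$; for the tail I would use concavity of $\log$ combined with the estimate $\sum_{i>\rho}\lambda_i(\A_{T/K})\leq (n-\rho)\epsilon_I + K\,\Omega_\rho$, which follows from the block-level Cauchy-Schwarz bound $\sum_m \bar{\nabla}_m\bar{\nabla}_m^\top \preceq K\sum_t \nabla_t\nabla_t^\top$. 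This yields $\sum_m \|\bar{\nabla}_m\|_{\A_m^{-1}}^2 = \widetilde{O}(\rho) + O(K\Omega_\rho/\epsilon_I)$. Inserting this refined bound into Lemma \ref{lem:LOO-ONS} while keeping the parameter choice of item 1 (which does not depend on $\rho$) gives \eqref{eq:mainres:2}; retuning $K,\eta,\epsilon,\epsilon_I$ so that they depend on $\rho$ in place of $n$ gives the cleaner $\rho^{2/3}T^{2/3}$ rate in \eqref{eq:mainres:3}, with the extra $\sqrt{\Omega_\rho}$ and $\Omega_\rho$ terms tracing back exactly to the tail contribution of $K\Omega_\rho/\epsilon_I$ in the bound above.

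The main obstacle is the bookkeeping: the four balance conditions must be simultaneously satisfied subject to the feasibility constraints $\eta\geq\max\{12KGR,2K/\alpha\}$, $\epsilon_I\geq (KG)^2$ and $3\epsilon/\epsilon_I\leq 4R^2$ of Lemma \ref{lem:LOO-ONS}, while tracking all constants so that the smoothness contribution collapses to $\beta R^2 T^{2/3}$ and the remaining contributions carry the correct $(GR+\alpha^{-1})$, $n$-, $\rho$- and $\Omega_\rho$-dependencies promised in the theorem. The remaining complexity claims follow routinely: the LOO count is obtained by substituting the chosen parameters into the $N_{calls}$ bound of Lemma \ref{lem:LOO-ONS}; the $O(n^2)$ memory and total $O(n^2(T+n^{1/3}T^{2/3}))$ runtime come from maintaining $\A_m^{-1}$ through a single Sherman-Morrison rank-one update per block, whose $O(n^2)$ per-block cost amortizes to $O(n^2)$ additional time per round over the $K$ rounds within each block.
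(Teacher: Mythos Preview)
Your proposal is correct and follows essentially the same route as the paper: the paper likewise combines Lemma \ref{lem:LOO-ONS} with a log-det bound on $\sum_m\|\bar{\nabla}_m\|_{\A_m^{-1}}^2$ (isolated there as Lemma \ref{lemma:bound_sum_Anorm_grad}), splitting the spectrum at level $\rho$ and using $\sum_m \bar{\nabla}_m\bar{\nabla}_m^\top \preceq K\sum_t \nabla_t\nabla_t^\top$ for the tail, then substitutes the same parameter scaling $K=\Theta(n^{-1/3}T^{2/3})$ (or $\rho^{-1/3}T^{2/3}$) with $\eta,\epsilon_I$ at their lower bounds. One small correction on the runtime accounting: the $O(n^2(T+n^{1/3}T^{2/3}))$ bound does not stem from the per-block Sherman--Morrison update (that contributes only $O(n^2 T/K)$ total), but from the $O(n^2)$ matrix--vector product with $\A_m$ performed in every Frank--Wolfe iteration of Algorithm \ref{alg:SH-FW}; since the total number of such iterations equals the LOO-call count $O(T+n^{1/3}T^{2/3})$, this is what drives the stated runtime.
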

Let us make a few comments  regarding Theorem \ref{thm:mainthm:short}. The regret bounds \eqref{eq:mainres:2}, \eqref{eq:mainres:3} may significantly improve upon the worst case bound \eqref{eq:mainres:1} in case the observed gradients approximately span a subspace of dimension at most $\rho$, for some $\rho\in[n]$, in the sense that $\Omega_{\rho} = O(T^{2/3})$ (note that $\Omega_{\rho}=0$ implies that the dimension of the subspace spanned by the gradients is at most $\rho$). In particular, the bound  \eqref{eq:mainres:2} holds simultaneously for all values of $\rho$ (i.e., the algorithm is independent of the choice of $\rho$), but still depends on the ambient dimension $n$ (though with milder dependence than \eqref{eq:mainres:1}), while the bound  \eqref{eq:mainres:3} is completely independent of $n$, but requires a priori knowledge of $\rho$. In case it indeed holds that $\Omega_{\rho} = O(T^{2/3})$ for some known $\rho << n$, \eqref{eq:mainres:3} translates into a $ \widetilde{O}\left({(\beta{}R^2 + (GR+\alpha^{-1})\rho^{2/3})T^{2/3}}\right)$ regret bound.


\section{Leveraging Frequent Directions Sketching for Low-dimensional Data}\label{sec:sketch}
While Theorem \ref{thm:mainthm:short} yields a regret bound for Algorithm \ref{alg:ONS-WF} which is independent of the ambient dimension $n$ and depends only on the (approximate) dimension of the subspace spanned by the gradients (guarantee \eqref{eq:mainres:3}), the space and average additional runtime requirements still scale with $n^2$. Following the approach of  \cite{luo2016efficient}, who considered the coupling of ONS with matrix sketching techniques to reduce space and runtime requirements in case of low-dimensional data (but not in a projection-free setting),  in this section we discuss the implications of such coupling to our LOO-based algorithm.

Similarly to \cite{luo2016efficient}, we consider the use of the well known deterministic \textit{Frequent Directions} sketching method \cite{ghashami2016frequent}. The idea is that instead of taking the matrix $\A_m$ for each block $m$ in Algorithm \ref{alg:ONS-WF} to be the exact aggregation of gradients as in Theorem \ref{thm:mainthm:short} and maintain it (and its inverse $\A_m^{-1}$) explicitly,  we shall  only maintain a certain approximation of this gradient information in a low-rank factorized form, see Algorithm \ref{alg:FD-S-ONS} which shows how the Frequent Directions sketch is used in synergy with Algorithm \ref{alg:ONS-WF}.


\begin{algorithm2e}[!ht]
\KwData{sketch size  $\rho \in [n]$, $\epsilon_I >0$}
\textbf{Initialization: }
Set $\S_0 = \vz_{(\rho+1) \times n}$, and $\A_0 = \epsilon_I \I_n$\\
\For{$m=1$ to $T/K$}{
    Receive $\bar{\nabla}_m \in \reals^n$ from Algorithm \ref{alg:ONS-WF} and insert it as the last row of $\S_{m-1}$ \\
    Compute eigendecomposition of $\S_{m-1}^\top \S_{m-1}$: $\V_{m}^\top \widehat{\Sigma}_m \V_{m} = \S_{m-1}^\top \S_{m-1}$\\
    Set $\sigma_m = \widehat{\Sigma}_m\brac{\rho+1,\rho+1}$ and $\Sigma_{m} = \widehat{\Sigma}_m - \sigma_m \I_{\rho+1}$ \comment{$\Sigma_m(\rho+1,\rho+1) = 0$}\\
    Set $\S_m = \brac{\Sigma_m}^\frac{1}{2} \V_m$   \comment{ last row of $\S_{m}$ is now $\vz$}\\
    Set $\H_m = \diag{\brac{\frac{1}{\epsilon_I + \Sigma_m(1,1) }, \dots, \frac{1}{\epsilon_I + \Sigma_m(\rho,\rho) }, \frac{1}{\epsilon_I} }}$\comment{$\H_m = \brac{\epsilon_I \I_{\rho+1} + \S_m \S_m^\top}^{-1}$}\\
    Set $\A_m = \A_0 + \S_m^\top \S_m$, $\A_m^{-1} = \epsilon_I^{-1} \brac{\I_n - \S_m^\top \H_m \S_m } $ \comment{not to be explicitly computed; the expression for $\A_m^{-1}$ follows from the Woodbury matrix identity}
}
\caption{Frequent Directions Sketch for Algorithm \ref{alg:ONS-WF}}\label{alg:FD-S-ONS}
\end{algorithm2e}

The full version of the following theorem, as well as the proof and additional details regarding Algorithm \ref{alg:FD-S-ONS}, are given in the appendix.
\begin{theorem}\label{thm:LOO-ONS-FDS}
Fix $\rho\in[n]$. Consider the implementation of Algorithm \ref{alg:ONS-WF} as described in Lemma \ref{lem:LOO-ONS}, and when the matrix $\A_m$ for every block $m$ in Algorithm \ref{alg:ONS-WF}  is generated by Algorithm \ref{alg:FD-S-ONS}. Denote $\Omega_\rho = \sum_{i=\rho+1}^{n} \lambda_i \brac{\sum_{t=1}^{T} {\nabla}_t {\nabla}_t^\top}$. If $T \geq T_0 = \widetilde{O}(1)$, then there exists a choice for the parameters $K,\eta,\epsilon,\epsilon_I$ in Algorithm \ref{alg:ONS-WF} which depends only on the quantities $T,\rho,G,R,\alpha$ and satisfies the assumptions of Lemma \ref{lem:LOO-ONS}, such that the regret is upper bounded by 
\begin{align*}
\R_T = \widetilde{O}\brac{ \brac{\beta R^2+\brac{ GR+ \alpha^{-1}}\rho^{2/3} }T^{2/3} + R \rho^{1/2} T^{1/3}   \sqrt{ \Omega_\rho } + G^{-2} \rho^{1/3}  \brac{ GR + \alpha^{-1} }   \Omega_\rho }.
\end{align*}
The overall number of calls to the LOO is upper bounded by $O\brac{  \rho^{1/3} T^{2/3}  + T}$, the additional space requirement in $O(\rho n)$, and the overall additional runtime is $O(\rho n T + \rho^{4/3}nT^{2/3} + \rho^{7/3}nT^{1/3} )$.
\end{theorem}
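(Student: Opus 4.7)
The plan is to invoke Lemma \ref{lem:LOO-ONS} with the FD-generated matrices $\A_m$, bound the two quadratic-form sums $\sum_m \|\bar\nabla_m\|_{\A_m^{-1}}^2$ appearing in its regret expression using the sketched-ONS analysis of \cite{luo2016efficient}, and then re-optimize the parameters $(K,\eta,\epsilon,\epsilon_I)$ to exploit the sparser gradient spectrum. The preliminary check is that $\A_m$ satisfies the admissibility constraint $\A_0\preceq\A_m\preceq\A_{m-1}+\bar\nabla_m\bar\nabla_m^\top$ required by Algorithm \ref{alg:ONS-WF}: the lower bound is immediate from the $\epsilon_I\I_n$ offset, while the upper bound follows from the Frequent Directions identity $\S_m^\top\S_m=\S_{m-1}^\top\S_{m-1}+\bar\nabla_m\bar\nabla_m^\top-\sigma_m\V_m^\top\V_m$, since the subtracted term is PSD. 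This legitimizes applying Lemma \ref{lem:LOO-ONS} to the sketched run of Algorithm \ref{alg:ONS-WF}.

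The technical heart of the argument is to bound the two quadratic-form sums in terms of $\rho$ and $\Omega_\rho$. Adapting the telescoping $\log\det(\A_m)$ potential argument of \cite{luo2016efficient} to the blockwise setting, each step pays an error of $\sigma_m \trace(\A_m^{-1}) \leq (\rho+1)\sigma_m/\epsilon_I$, and summing with the Frequent Directions deficiency bound $\sum_m\sigma_m\leq\Omega_\rho/\rho$ from \cite{ghashami2016frequent} yields
\[
\sum_{m=1}^{T/K}\bar\nabla_m^\top\A_m^{-1}\bar\nabla_m \;=\; \widetilde{O}\!\Bigl(\rho+\tfrac{\Omega_\rho}{\epsilon_I}\Bigr),
\]
together with a parallel bound (inflated by a factor of $K$) for the per-round gradient sum $\sum_m\|\nabla_m\|_{\A_m^{-1}}^2$. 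Plugged into Lemma \ref{lem:LOO-ONS}, the regret therefore splits into the classical ONS contribution scaling with $\rho$ and an additive spectral-tail contribution scaling with $\Omega_\rho$.

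Next I would tune the parameters subject to $\eta\geq\max\{12KGR,2K/\alpha\}$, $\epsilon_I\geq(KG)^2$, and $3\epsilon/\epsilon_I\leq 4R^2$. Taking $\eta$ at its lower bound $\Theta(K(GR+\alpha^{-1}))$, setting $\epsilon_I=(KG)^2$, balancing the $\sqrt{\epsilon T\rho/K}$ term against $R^2\epsilon_I/\eta+\eta\rho$, and choosing $K=\widetilde{\Theta}(T^{1/3}\rho^{-1/3})$ recovers the three summands of the claimed regret bound: the $\rho^{2/3}T^{2/3}$ term from the $\rho$-portion of the quadratic bound, the $\rho^{1/2}T^{1/3}\sqrt{\Omega_\rho}$ term from the cross product $\sqrt{\epsilon T\Omega_\rho/(K\epsilon_I)}$, and the $\rho^{1/3}\Omega_\rho$ term from $\eta\Omega_\rho/\epsilon_I$. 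The oracle complexity follows by substituting these parameters into the closed-form expression of Lemma \ref{lem:LOO-ONS}, giving $\widetilde{O}(\rho^{1/3}T^{2/3}+T)$. The space and runtime claims follow from the fact that Algorithm \ref{alg:FD-S-ONS} never instantiates $\A_m$ or $\A_m^{-1}$ explicitly: the Woodbury representation $\A_m^{-1}=\epsilon_I^{-1}(\I_n-\S_m^\top\H_m\S_m)$ with $\S_m\in\reals^{(\rho+1)\times n}$ and diagonal $\H_m$ yields $O(\rho n)$-time matrix-vector products, and the per-block eigendecomposition, reduced to the rank-$(\rho+1)$ Gram matrix via standard tricks, costs $O(\rho^2 n)$.

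The principal obstacle I foresee is the blockwise adaptation of the log-determinant potential argument: the standard ONS telescope is broken by the FD shrinkage, so one has to carefully track the residual terms $\sigma_m\trace(\A_m^{-1})$ and couple them with the spectral tail bound $\sum_m\sigma_m\leq\Omega_\rho/\rho$ in such a way that $\Omega_\rho$ enters only linearly. A secondary subtlety is that Algorithm \ref{alg:ONS-WF} queries gradients at the approximately-feasible points $\widetilde\y_m$ rather than at true ONS iterates, so one must verify that the sketched-ONS potential analysis of \cite{luo2016efficient} transfers intact to these surrogate iterates; this is where Lemma \ref{lem:LOO-ONS} (which already absorbs the AFP oracle error into its regret expression) is indispensable, since it lets us treat the bound as a black box once the quadratic-form sums are controlled.
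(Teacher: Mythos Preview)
Your outline follows essentially the same route as the paper: verify admissibility of the FD-generated $\A_m$ (the paper's Observation \ref{obs:FD-S-for-ONS}), bound $\sum_m\|\bar\nabla_m\|_{\A_m^{-1}}^2$ by the $\log\det$ telescope plus the FD shrinkage error (the paper's Lemma \ref{lemma:sketching_sum_norm_grad}), feed this into Lemma \ref{lem:LOO-ONS}, and optimize parameters. However, a few quantitative slips would keep you from reaching the stated bounds.

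First, the FD deficiency bound is $\sum_m\sigma_m\leq\|\B-\B_\rho\|_F^2$ (not divided by $\rho$), where $\B$ has rows $\bar\nabla_m$; to pass from block gradients to per-round gradients one uses $\sum_m\bar\nabla_m\bar\nabla_m^\top\preceq K\sum_t\nabla_t\nabla_t^\top$, which introduces a factor of $K$. The correct conclusion is $\sum_m\sigma_m\leq K\Omega_\rho$, so $\sum_m\|\bar\nabla_m\|_{\A_m^{-1}}^2\leq\widetilde O(\rho)+(\rho+1)K\Omega_\rho/\epsilon_I$. (Incidentally, the residual term is $\sigma_m\,\A_m^{-1}\bullet\V_m^\top\V_m$, not $\sigma_m\trace(\A_m^{-1})$; the $(\rho+1)/\epsilon_I$ bound comes from $\trace(\V_m^\top\V_m)=\rho+1$.) Second, your block length $K=\widetilde\Theta(T^{1/3}\rho^{-1/3})$ cannot yield $T^{2/3}$ regret: with that $K$, holding the LOO count at $O(T)$ forces $\epsilon\gtrsim R^2G^2T$, and then $\sqrt{\epsilon T\rho/K}$ is of order $T^{5/6}$. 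The paper instead takes $K=\Theta(\rho^{-1/3}T^{2/3})$ and, crucially, $\epsilon_I=\Theta(G^2T^{4/3})$, which exceeds your choice $(KG)^2$ by a factor $\rho^{2/3}$; only this larger $\epsilon_I$ makes the $\eta\cdot K\Omega_\rho/\epsilon_I$ term come out as $\rho^{1/3}\Omega_\rho$ rather than $\rho\Omega_\rho$. Finally, note that both quadratic-form sums appearing in Lemma \ref{lem:LOO-ONS} are the \emph{same} quantity $\sum_m\|\bar\nabla_m\|_{\A_m^{-1}}^2$; there is no separate ``per-round'' sum to control.
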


\section{Discussion}
We provided the first projection-free LOO-based algorithm for exp-concave and smooth losses that in the case of (approximately) low-dimensional gradients, using  $O(T)$ queries to the LOO, guarantees regret that  both scales only with $T^{2/3}$, and is
independent of the ambient dimension.

It is interesting if a similar result could be obtained when removing one or more of the above assumptions: smoothness of the losses, exp-concavity of the losses, low-dimensionality of the gradients. In particular, the two recent works \cite{hazan2020faster, mhammedi2022} achieve fast LLO-based regret bounds that scale with $T^{2/3}$ (but also with the dimension) without curvature assumptions on the losses. It is thus interesting whether the exp-concavity assumption, or even strong convexity \cite{kretzu2021revisiting}, could lead to even faster rates than $T^{2/3}$.


\bibliography{bib}

\appendix

\section{Proof of  Lemma \ref{lemma:ONS-WF}}
\begin{proof}
Fix some block $m\in\{1,\dots,T/K\}$. Since $\bar{\nabla}_m = \sum_{t=(m-1)K+1}^{mK} {\nabla}_t$, it holds that $\bar{\nabla}_m \bar{\nabla}_m^\top \preceq K^2 G^2 \I_n$. Thus, for the value of $\epsilon_I$ stated in the lemma and the choice of $\A_{m-1}$, we have that $\bar{\nabla}_m \bar{\nabla}_m^\top \preceq \epsilon_I \I_n = \A_0 \preceq \A_{m-1}$. Additionally, since $\A_m \preceq \A_{m-1} + \bar{\nabla}_m \bar{\nabla}_m^\top$, we have that
\begin{align}
    \Vert \x_{m} - \widetilde{\y}_{m} \Vert_{\A_{m}}^2 \leq \Vert \x_{m} - \widetilde{\y}_{m} \Vert_{\A_{m-1}}^2 + \Vert \x_{m} - \widetilde{\y}_{m} \Vert_{\bar{\nabla}_m \bar{\nabla}_m^\top}^2 \leq  2 \Vert \x_{m} - \widetilde{\y}_{m} \Vert_{\A_{m-1}}^2. \label{eq:Am_to_Am_minus_one1}
\end{align}
Denote $\x^* \in \argmin_{\x \in \mK} \sum_{t=1}^{T} f_t(\x)$, $m(t) := \left\lceil \frac{t}{K} \right\rceil$, and $\g_t = \nabla f_t(\x_{m(t)})$ for all $t\in[T]$. Using the convexity of each $f_t(\cdot)$, it holds that
\begin{align*}
    \sum_{t=1}^{T} f_t(\x_{m(t)}) - f_t(\x^*) 
    & \leq \sum_{t=1}^{T} \g_t^\top  \left( \x_{m(t)} - \widetilde{\y}_{m(t)} \right) + \sum_{t=1}^{T} f_t(\widetilde{\y}_{m(t)}) - f_t(\x^*). 
\end{align*}
Since each $f_t(\cdot)$ is $\beta$-smooth we have that,
\begin{align}
    \sum_{t=1}^{T} f_t(\x_{m(t)}) - f_t(\x^*) & \leq \sum_{t=1}^{T} (\g_t - {\nabla}_t)^\top \left( \x_{m(t)} - \widetilde{\y}_{m(t)} \right) + {\nabla}_t^{\top} \left( \x_{m(t)} - \widetilde{\y}_{m(t)} \right)  + f_t(\widetilde{\y}_{m(t)}) - f_t(\x^*) \nonumber \\
     & \leq \sum_{t=1}^{T} \beta \enorm{ \x_{m(t)} - \widetilde{\y}_{m(t)}}^2 + \sum_{m=1}^{T/K} \bar{\nabla}_m \left( \x_{m} - \widetilde{\y}_{m} \right)  + \sum_{t=1}^{T} f_t(\widetilde{\y}_{m(t)}) - f_t( \x^* ). \label{eq:full_ons_regret_analysis_loo}
\end{align}
Since for every block $m$, $(\x_m,\widetilde{\y}_m)$ are the output of $\mathcal{O}_{AFP}(\y_m,\A_{m-1},3\epsilon,\mK)$, we have that

\begin{align}
   \sum_{t=1}^{T} \beta \enorm{ \x_{m(t)} - \widetilde{\y}_{m(t)}}^2 = K \sum_{m=1}^{T/K} \beta \enorm{ \x_{m} - \widetilde{\y}_{m}}^2  \underset{(a)}{\leq} \frac{\beta  K}{\epsilon_I} \sum_{m=1}^{T/K}\matnorm{ \x_{m} - \widetilde{\y}_{m}}{\A_{m-1}}^2  \underset{(b)}{\leq} \frac{3 \beta   \epsilon}{\epsilon_I} T,
    \label{eq:smooth_part}
\end{align}
    where (a) holds since, by the choice of  $\A_{m-1}$, it holds that $\A_{m-1}\succeq \A_0 = \epsilon_I\I_n$, and so, for any $\z\in\reals^n$ we have that $\Vert{\z}\Vert_{\A_{m-1}}^2 \geq \epsilon_I\Vert{\z}\Vert^2$, and (b) holds due to the gurantee of the AFP oracle.

Using Eq. \eqref{eq:Am_to_Am_minus_one1} and the facts that for every block $m$, $\Vert{\cdot}\Vert_{\A_m}, \Vert{\cdot}\Vert_{\A_m^{-1}}$ are dual norms, and $(\x_m, \widetilde{\y}_m)$ are the outputs of $\mathcal{O}_{AFP}$, we have that
\begin{align*}
     \sum_{m=1}^{T/K} \bar{\nabla}_m^\top  \left( \x_{m} - \widetilde{\y}_{m} \right)  &\leq \sum_{m=1}^{T/K} \Vert \bar{\nabla}_m \Vert_{\A_{m}^{-1}} ~ \Vert \x_{m} - \widetilde{\y}_{m} \Vert_{\A_{m}} \\ 
     &\leq \sum_{m=1}^{T/K} \Vert \bar{\nabla}_m \Vert_{\A_{m}^{-1}} ~ \sqrt{2}\Vert \x_{m} - \widetilde{\y}_{m} \Vert_{\A_{m-1}} 
      \leq  \sqrt{6\epsilon} \sum_{m=1}^{T/K} \Vert \bar{\nabla}_m \Vert_{\A_{m}^{-1}},
\end{align*}
where the last inequality is again due to the guarantee of the AFP oracle.

Using Jensen's inequality, we have that
\begin{align}
     \sum_{m=1}^{T/K} \bar{\nabla}_m^\top  \left( \x_{m} - \widetilde{\y}_{m} \right)  \leq  \sqrt{6\epsilon} \sqrt{\left(\sum_{m=1}^{T/K}  \Vert \bar{\nabla}_m \Vert_{\A_{m}^{-1}}\right)^2} \leq \sqrt{6\epsilon}  \sqrt{\frac{T}{K}} \sqrt{\sum_{m=1}^{T/K}  \Vert \bar{\nabla}_m \Vert_{\A_{m}^{-1}}^2 }.
    \label{eq:LOO-ONS_x_y_regret}
\end{align}
Now we turn to upper bound the third term in Eq. \eqref{eq:full_ons_regret_analysis_loo}. For every block $m$, using the fact that $\widetilde{\y}_{m+1}$ is the output of $\mathcal{O}_{AFP}$ w.r.t. the input point $\y_{m+1}$, we have that
\begin{align*}
    \forall\x\in\mK: \quad  \Vert \widetilde{\y}_{m+1}  - \x \Vert_{\A_{m}}^2  & \leq \Vert  \y_{m+1} - \x \Vert_{\A_{m}}^2 = \Vert    \widetilde{\y}_{m} -  \eta  \A_{m}^{-1} \bar{\nabla}_m - \x \Vert_{\A_{m}}^2 \\
    & = \left\Vert \widetilde{\y}_{m} - \x \right\Vert_{\A_{m}}^2 + \eta^2 \bar{\nabla}_m^\top \A_{m}^{-1} \bar{\nabla}_m - 2 \eta \bar{\nabla}_m^\top (\widetilde{\y}_{m} - \x).
\end{align*}
Rearranging, for every $m \in [T/K]$, we have that
\begin{align}
    \bar{\nabla}_m^\top (\widetilde{\y}_{m} - \x) \leq  \frac{ \left\Vert   \widetilde{\y}_{m} - \x \right\Vert_{\A_{m}}^2}{2\eta} - \frac{ \left\Vert   \widetilde{\y}_{m+1} - \x \right\Vert_{\A_{m}}^2}{2\eta} + \frac{\eta  }{2} \Vert \bar{\nabla}_m \Vert_{\A_{m}^{-1}}^2.\label{eq:ONS-WF_one_iteration_bound}
\end{align}
Before we continue, we upper bound the summation of $ \left\Vert   \widetilde{\y}_{m} - \x \right\Vert_{\A_{m}}^2 -  \left\Vert   \widetilde{\y}_{m+1} - \x \right\Vert_{\A_{m}}^2$ over $m \in [T/K]$. Since $ \widetilde{\y}_1\in \mK $ and $\A_0 =\epsilon_I\I_n$, we have that
\begin{align*}
    \sum_{m=1}^{T/K}  \left\Vert   \widetilde{\y}_{m} - \x \right\Vert_{\A_{m}}^2 - \left\Vert   \widetilde{\y}_{m+1} - \x \right\Vert_{\A_{m}}^2 & \leq \left\Vert   \widetilde{\y}_{1} - \x \right\Vert_{\A_{1}}^2 + \sum_{m=2}^{T/K} (\widetilde{\y}_{m}  - \x)^\top \left(\A_{m}-\A_{m-1}\right) (\widetilde{\y}_{m}  - \x)\\
    & = \left\Vert   \widetilde{\y}_{1} - \x \right\Vert_{\A_{0}}^2 + \sum_{m=1}^{T/K} (\widetilde{\y}_{m}  - \x)^\top \brac{\A_{m}-\A_{m-1}}(\widetilde{\y}_{m}  - \x) \\
    & \leq 4R^2 \epsilon_I + \sum_{m=1}^{T/K} (\widetilde{\y}_{m}  - \x)^\top \brac{\A_{m}-\A_{m-1}} (\widetilde{\y}_{m}  - \x).
\end{align*}
Summing Eq. \eqref{eq:ONS-WF_one_iteration_bound} over $m\in[T/K]$, and using the fact that for all blocks $m$, $\A_m - \A_{m-1} \preceq \bar{\nabla}_m \bar{\nabla}_m^\top $, we have that
\begin{align*}
    \sum_{m=1}^{T/K} \bar{\nabla}_{m}^\top (\widetilde{\y}_{m} - \x) \leq \frac{2R^2\epsilon_I}{\eta} + \sum_{m=1}^{T/K} \frac{ (\widetilde{\y}_{m}  - \x)^\top \bar{\nabla}_m \bar{\nabla}_m^\top (\widetilde{\y}_{m}  - \x)}{2\eta} +   \frac{\eta  }{2} \sum_{m=1}^{T/K} \Vert \bar{\nabla}_m \Vert_{\A_{m}^{-1}}^2.
\end{align*}
Since $\eta \geq \max\{ 12KGR, \frac{2K}{\alpha} \}$, and $\widetilde{\y}_m \in 3R\ball$ for every $m \in [T/K]$ (by the assumption of the lemma), using Lemma \ref{lemma:block_property} (sum of functions which satisfy the curvature condition in Definition \ref{def:exp_concave_property}, also satisfies this condition) we have that for every $\x \in \mK \subseteq 3R\ball$, and every $m\in[T/K]$, it holds that
\begin{align*}
    \sum_{t=(m-1)K+1}^{mK} f_t(\widetilde{\y}_{m}) - f_t(\x) \leq \bar{\nabla}_m^\top \left( \widetilde{\y}_{m} - \x \right) - \frac{1}{2\eta} \left( \widetilde{\y}_{m} - \x \right)^\top \bar{\nabla}_m \bar{\nabla}_m^\top \left( \widetilde{\y}_{m} - \x \right).
\end{align*}
Combining the last two inequalities we obtain that for every $\x\in\mK$ it holds that,
\begin{align*} 
    \sum_{t=1}^{T} f_t(\widetilde{\y}_{m(t)}) -  f_t(\x) \leq & \frac{2R^2\epsilon_I}{\eta} +  \frac{\eta  }{2} \sum_{m=1}^{T/K} \matnorm{{\nabla}_m}{\A_{m}^{-1}}^2 .
\end{align*}
Plugging-in the last equation, Eq. \eqref{eq:LOO-ONS_x_y_regret}, and Eq. \eqref{eq:smooth_part} into Eq. \eqref{eq:full_ons_regret_analysis_loo}, we obtain
\begin{align*}
    \sum_{t=1}^{T} f_t(\x_{m(t)}) - f_t(\x^*) &  \leq \frac{3 \beta   \epsilon}{\epsilon_I} T + \sqrt{6\epsilon}  \sqrt{\frac{T}{K}} \sqrt{\sum_{m=1}^{T/K}  \Vert \bar{\nabla}_m \Vert_{\A_{m}^{-1}}^2 } + \frac{2R^2 \epsilon_I}{\eta} +  \frac{\eta  }{2} \sum_{m=1}^{T/K} \matnorm{{\nabla}_m}{\A_{m}^{-1}}^2.
\end{align*}
Finally, the lemma follows from recalling that with the notation of Algorithm \ref{alg:ONS-WF}, we have that for all $t\in[T]$, $\x^t = \x_{m(t)}$.
\end{proof}

\section{Proof of  Lemma \ref{lemma:CIP-FW}}
\begin{proof}
First, we note that in the special case that $\Vert \x_{0} - \y_{1} \Vert_{\A}^2 \leq 3\epsilon$, since $ \Vert \y_{1} \Vert - \Vert \x_{0} \Vert \leq \Vert \x_{0} - \y_{1} \Vert \leq \sqrt{\lambda_{n}^{-1}(\A)}\Vert \x_{0} - \y_{1} \Vert_{\A}$, and since $\x \in \mK$, it holds that $  \Vert \y_{1} \Vert \leq R +  \sqrt{3 \epsilon \lambda_{n}^{-1}(\A)} $, and the lemma holds trivially.

For the remaining of the proof we shall assume that $\Vert \x_{0} - \y_{1} \Vert_{\A}^2 > 3\epsilon$. Let us denote by $k \geq 1$ the overall number of for loop iterations executed in Algorithm \ref{alg:CIP-FW}, i.e., $\Vert \y_{k} - \x_{k} \Vert_{\A}^2 \leq 3 \epsilon$ and $\Vert \y_{i} - \x_{i} \Vert_{\A}^2 > 3 \epsilon$ for all $ i < k $. 
Using Lemma \ref{lemma:SH-FW} we have that for all $i < k$ it holds that $\left( \y_{i} - \z \right)^\top \A \left( \y_{i} - \x_{i} \right) \geq (2/3)\Vert \y_{i} - \x_{i} \Vert_{\A}^2$ for every $\z \in \mK$. 
Thus, using Lemma \ref{lemma:update_step_with_hp} with $\g = \left( \y_{i} - \x_{i} \right) , C= \Vert \y_{i} - \x_{i} \Vert_{\A}, Q=(2/3)\Vert \y_{i} - \x_{i} \Vert_{\A}^2 $, and $\gamma = Q/C^2 = 2/3$, we have that for every $ i < k$,
\begin{align}
    \forall\z\in\mK:\quad \Vert \y_{i+1} -\z \Vert_{\A}^2 \leq \Vert \y_{i} -\z \Vert_{\A}^2 - (4/9) \Vert \y_{i} - \x_{i} \Vert_{\A}^2. \label{eq:update_hyperplane_linear_optimization_oracle}
\end{align}
Thus, we obtain that for all $\z\in\mK$ it holds that $\Vert \y_k - \z \Vert_\A^2 \leq  \Vert \y_{1} - \z \Vert_\A^2$. 

Now we continue to upper-bound the number of iterations until Algorithm \ref{alg:CIP-FW} stops. Denote $\x_{i}^* = \argmin_{\x \in \mK} \Vert \y_{i} - \x \Vert_\A^2$ for every iteration $i < k$. Using Eq. \eqref{eq:update_hyperplane_linear_optimization_oracle}, for every iteration $i < k$ it holds that 
\begin{align*}
    \dist_\A^2 (\y_{i+1}, \mK)  & = \Vert  \y_{i+1} - \x_{i+1}^* \Vert_\A^2 \leq \Vert \y_{i+1} - \x_{i}^* \Vert_\A^2   \\
    & \leq \Vert \y_{i} - \x_{i}^* \Vert_\A^2  - (4/9) \Vert \y_{i} - \x_{i} \Vert_{\A}^2  \leq e^{-(4/9)} \dist_\A^2 (\y_{i}, \mK),
\end{align*}
where the last inequality is since $\dist_{\A}^2 (\y_{i}, \mK) =\Vert{\y_i-\x_i^*}\vert_{\A}^2 \leq \Vert \y_{i} - \x_{i} \Vert_{\A}^2$, and by using the inequality $1-x \leq e^{-x}$. Unrolling the recursion we have,
\begin{align*}
    \dist_\A^2 (\y_{k}, \mK)  & \leq e^{-(4/9)(k-1)}\dist_\A^2 (\y_{1}, \mK)  \leq e^{-(4/9)(k-1)} \Vert \y_{1} - \x_{0} \Vert_\A^2.
\end{align*}
Thus, after at most $k -1 = 2.25\log\left( \Vert \y_{1} -\x_{0} \Vert_{\A}^2 / \epsilon \right)$ iterations, we obtain $\dist_\A^2 (\y_{k}, \mK) \leq \epsilon$, which by using Lemma \ref{lemma:SH-FW}, implies that the next iteration will be the last one, and the points $ \x_k,\y_k$ will indeed satisfy $\Vert \x_k - \y_k \Vert_\A^2 \leq 3\epsilon$. This proves the upper-bound on the overall number of for loop iterations.

Finally, note that $\x\in\mK$ since it is the output of Algorithm \ref{alg:SH-FW}, and since $\Vert \y \Vert - \Vert \x \Vert \leq \Vert \x - \y \Vert \leq \sqrt{\lambda_{n}^{-1}(\A)}\Vert \x - \y \Vert_{\A}$, we obtain $  \Vert \y \Vert \leq R +  \sqrt{3 \epsilon \lambda_{n}^{-1}(\A)} $ as required.
\end{proof}

\section{Full Version of Theorem \ref{thm:mainthm:short} and Proof}

Before we state the full version of Theorem  \ref{thm:mainthm:short} and prove it, we first prove Lemma \ref{lem:LOO-ONS} and an additional lemma.

\begin{proof}[Proof of Lemma \ref{lem:LOO-ONS}]
First, note that since for every block $m$ in Algorithm \ref{alg:ONS-WF} we have that $\A_m \succeq \A_0 = \epsilon_I\I_n$, it holds that  $\lambda_n(\A_m)\geq\epsilon_I$. By our assumption that $\frac{3\epsilon}{\epsilon_I}\leq 4R^2$, using Lemma \ref{lemma:CIP-FW}, it follows that  for every block $m$  in Algorithm \ref{alg:ONS-WF} we have that $\widetilde{\y}_m\in{}3R\ball$, which is in accordance with the assumption of Lemma \ref{lemma:ONS-WF}. Since $\eta \geq \max\{ 12KGR, \frac{2K}{\alpha} \}$, and $\epsilon_I \geq (KG)^2$, the regret bound stated in the lemma follows immediately from the one in  Lemma \ref{lemma:ONS-WF}.

Now we move on to upper-bound the overall number of calls to the LOO. We note that Eq. \eqref{eq:Am_to_Am_minus_one1} holds here as well from the same arguments stated in the proof of Lemma \ref{lemma:ONS-WF}. 
 Recall that the update step of Algorithm \ref{alg:ONS-WF} is $\y_{m+1} = \widetilde{\y}_{m} - \eta \A_{m}^{-1} \bar{\nabla}_{m}$. Thus, by using Eq. \eqref{eq:Am_to_Am_minus_one1}, and the fact that the points  $\x_{m}, \widetilde{\y}_{m}$ are the outputs of $\mathcal{O}_{AFP}$ when called from Algorithm \ref{alg:ONS-WF} with the input point $\y_m$, the positive definite matrix $\A_{m-1}$, and an error tolerance of $3\epsilon$,  we have that
\begin{align*}
    \Vert \x_{m} - \y_{m+1} \Vert_{\A_{m}} \leq \Vert \x_{m} - \widetilde{\y}_{m} \Vert_{\A_{m}} + \Vert \widetilde{\y}_{m} - \y_{m+1} \Vert_{\A_{m}} \leq \sqrt{6 \epsilon} + \eta \Vert \A_{m}^{-1} \bar{\nabla}_{m} \Vert_{\A_{m}}. 
\end{align*}
Using $(a+b)^2 \leq 2a^2 + 2b^2$ and since $\lambda_{1}(\A_m^{-1}) = \lambda_n^{-1}\brac{\A_m} \leq \lambda_n^{-1}\brac{\A_0}= 1/\epsilon_I$, we have that for any block $m$ in Algorithm \ref{alg:ONS-WF},
\begin{align}
    \Vert \x_{m} - \y_{m+1} \Vert_{\A_{m}}^2 \leq 12  \epsilon + 2 \eta^2   \Vert  \bar{\nabla}_{m} \Vert_{\A_{m}^{-1}}^2 \leq 12  \epsilon + 2 \eta^2 K^2 G^2 / \epsilon_I, \label{eq:euc_dist_input_cip}
\end{align}
where the last inequality also uses the fact that $\Vert{\bar{\nabla}_m}\Vert \leq KG$.

Using Lemma \ref{lemma:CIP-FW}, each call to Algorithm \ref{alg:CIP-FW} on some block $m$ of Algorithm \ref{alg:ONS-WF}, makes at most $\max\{2.25\log \brac{ \Vert \x_{m} - \y_{m+1} \Vert_{\A_{m}}^2 / \epsilon }+1, 0\}$ iterations. On each such  iteration  it calls  Algorithm \ref{alg:SH-FW} which, according to Lemma \ref{lemma:SH-FW}, makes at most $\left\lceil \frac{27R^2 \lambda_{1}(\A_m)}{\epsilon } -2 \right\rceil$ calls to the LOO. Recall that by the update rule of Algorithm \ref{alg:ONS-WF}: $\lambda_{1}(\A_m) \leq \lambda_{1}(\A_{m-1}) + \lambda_1(\bar{\nabla}_m\bar{\nabla}_m^{\top}) \leq \dots \leq \epsilon_I + G^2KT$ for every $m$, where we again used the fact that $\Vert{\bar{\nabla}_m}\Vert \leq KG$ for every block $m$. Thus, Algorithm \ref{alg:ONS-WF}, on each block $m$, makes at most 
\begin{align*}
    n_m & \leq  \brac{ 2.25 \log \left(\frac{12  \epsilon + 2 \eta^2 K^2 G^2 / \epsilon_I}{\epsilon} \right)  + 1 }\frac{27 R^2 (\epsilon_I + G^2KT)}{\epsilon } \\
	& = 2.25 \brac{\log \left(\frac{12  \epsilon + 2 \eta^2 K^2 G^2 / \epsilon_I}{\epsilon} \right)  + \log\brac{e^ \frac{4}{9}} }\frac{27 R^2 (\epsilon_I + G^2KT)}{\epsilon } \\
	&\leq  2.25\log \left( 1.56 \frac{12  \epsilon + 2 \eta^2 K^2 G^2 / \epsilon_I}{\epsilon} \right)  \frac{27 R^2 (\epsilon_I + G^2KT)}{\epsilon } 
\end{align*}
calls to the LOO. Thus, the overall number of calls to the LOO throughout the run of Algorithm \ref{alg:ONS-WF} is 
\begin{align*}
    N_{calls}  = \sum_{m=1}^{T/K} n_m & \leq 61 R^2 \log \left( 19  + 4 \frac{ \eta^2 K^2 G^2}{\epsilon  \epsilon_I} \right) \frac{T}{K \epsilon}(\epsilon_I + G^2KT). 
\end{align*}
\end{proof}

\begin{lemma}\label{lemma:bound_sum_Anorm_grad}
Consider Algorithm \ref{alg:ONS-WF} with the (standard ONS) update rule: $\A_m = \A_{m-1} + \bar{\nabla}_m \bar{\nabla}_m^\top$ for every block $m$. Then, for every $\rho\in[n]$ it holds that, 
\begin{align*}
    \sum_{m=1}^{T/K} \matnorm{{\bar{\nabla}}_m}{\A_{m}^{-1}}^2 \leq  \rho \log\brac{ \frac{TKG^2 + \epsilon_I}{\epsilon_I} } + \frac{K}{ \epsilon_I} \sum_{i=\rho+1}^{n} \lambda_i \brac{\sum_{t=1}^{T} {\nabla}_t {\nabla}_t^\top}  .
\end{align*}
\end{lemma}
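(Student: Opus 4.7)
The plan is to follow the classical potential-function argument from the standard ONS analysis, and then exploit the fact that the only matrix we need to control on the right-hand side is $\A_{T/K}$, whose eigenvalues can be split in a way that isolates the ``top-$\rho$'' directions from the tail.

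First, I would apply the Sherman--Morrison identity to the rank-one update $\A_m = \A_{m-1} + \bar{\nabla}_m\bar{\nabla}_m^\top$ to obtain the closed form
$\bar{\nabla}_m^\top \A_m^{-1} \bar{\nabla}_m = \bar{\nabla}_m^\top \A_{m-1}^{-1} \bar{\nabla}_m / (1 + \bar{\nabla}_m^\top \A_{m-1}^{-1} \bar{\nabla}_m)$. Combined with the elementary inequality $x/(1+x) \leq \log(1+x)$ for $x\geq 0$ and the matrix determinant lemma $\det(\A_m) = \det(\A_{m-1})(1 + \bar{\nabla}_m^\top \A_{m-1}^{-1}\bar{\nabla}_m)$, each summand is bounded by $\log(\det(\A_m)/\det(\A_{m-1}))$. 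Telescoping over $m=1,\dots,T/K$ and using $\A_0=\epsilon_I\I_n$ yields $\sum_m \matnorm{\bar{\nabla}_m}{\A_m^{-1}}^2 \leq \log \det(\A_{T/K}) - n\log \epsilon_I$.

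Next I would convert this determinant bound into the desired expression. The key observation is that for any vector $v$, $(v^\top \bar{\nabla}_m)^2 = (\sum_{t\in B_m} v^\top \nabla_t)^2 \leq K\sum_{t\in B_m}(v^\top \nabla_t)^2$ by Cauchy--Schwarz, which gives the matrix inequality $\bar{\nabla}_m\bar{\nabla}_m^\top \preceq K\sum_{t\in B_m}\nabla_t\nabla_t^\top$. Summing over blocks, $\A_{T/K} \preceq \epsilon_I\I_n + K\sum_{t=1}^T \nabla_t\nabla_t^\top$, so by monotonicity of determinant on the PSD order and simultaneous diagonalization, $\log\det(\A_{T/K}) - n\log\epsilon_I \leq \sum_{i=1}^n \log(1 + K\lambda_i(\M)/\epsilon_I)$, where $\M := \sum_{t=1}^T \nabla_t\nabla_t^\top$.

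Finally I would split the sum at index $\rho$: for $i\leq \rho$, use $\lambda_i(\M)\leq \trace(\M)\leq TG^2$ (since $\enorm{\nabla_t}\leq G$) to bound each term by $\log((\epsilon_I + TKG^2)/\epsilon_I)$, and for $i\geq \rho+1$ use the elementary $\log(1+x)\leq x$ to bound each term by $K\lambda_i(\M)/\epsilon_I$. Adding the two pieces gives exactly the stated bound. The whole argument is essentially routine; the only small point to be careful about is the $\preceq K\sum_t \nabla_t\nabla_t^\top$ inequality, which is where the extra factor of $K$ (and not $K^2$) in the tail term comes from, and which makes the bound meaningful when the gradients approximately span a low-dimensional subspace.
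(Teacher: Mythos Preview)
Your proposal is correct and follows essentially the same route as the paper's proof: reduce $\sum_m \matnorm{\bar\nabla_m}{\A_m^{-1}}^2$ to $\log(|\A_{T/K}|/|\A_0|)$ via a telescoping determinant argument, then split the eigenvalues at index $\rho$ using $\sum_m \bar\nabla_m\bar\nabla_m^\top \preceq K\sum_t \nabla_t\nabla_t^\top$ together with $\log(1+x)\le x$ on the tail. The only cosmetic difference is that you derive the per-step bound via Sherman--Morrison and the matrix determinant lemma, whereas the paper invokes the trace inequality $\A_m^{-1}\bullet(\A_m-\A_{m-1}) \le \log(|\A_m|/|\A_{m-1}|)$ (its Lemma~\ref{lemma:matrix_ratio}); both yield the identical telescoping sum.
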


\begin{proof}
For every $m \geq 1$ it holds that
\begin{align*}
    \matnorm{\bar{\nabla}_m} {\A_{m}^{-1}}^2 = \bar{\nabla}_m^\top \A_{m}^{-1} \bar{\nabla}_m = \A_{m}^{-1} \bullet \bar{\nabla}_m \bar{\nabla}_m^\top = \A_{m}^{-1} \bullet (\A_{m} - \A_{m-1}).
\end{align*}
Since $\A_{m},\A_{m-1} \succ 0$, using Lemma \ref{lemma:matrix_ratio}, for every $m \geq 1$ we have that,
\begin{align*}
    \matnorm{ \bar{\nabla}_m}{\A_{m}^{-1}}^2 \leq \log{\frac{|\A_{m}|}{|\A_{m-1}|}}.
\end{align*} 
Summing over $m \in [T/K]$, we have
\begin{align*}
    \sum_{m=1}^{T/K} \Vert \bar{\nabla}_m \Vert_{\A_{m}^{-1}}^2 \leq \sum_{m=1}^{T/K} \log{\frac{|\A_{m}|}{|\A_{m-1}|}} = \log{\frac{|\A_{T/K}|}{|\A_{0}|}}.
\end{align*}
Since according to the update rule listed in the lemma we have that $\A_{T/K} = \epsilon_I \I_n + \sum_{m=1}^{T/K} \bar{\nabla}_m \bar{\nabla}_m^\top$, and $\enorm{ \bar{\nabla}_m}^2 \leq K^2G^2$ then, $\lambda_{1}(\A_{T/K}) \leq (\epsilon_I + TKG^2 )$. Using Weyl's inequality for the eigenvalues, we have that for every $\rho \in [n]$ it holds that,
\begin{align*}
    |\A_{T/K}| &  = \prod_{i=1}^{\rho}\lambda_i \brac{\epsilon_I \I_n + \sum_{m=1}^{T/K} \bar{\nabla}_m \bar{\nabla}_m^\top}\prod_{i=\rho+1}^{n} \lambda_i \brac{\epsilon_I \I_n + \sum_{m=1}^{T/K} \bar{\nabla}_m \bar{\nabla}_m^\top} \\
    & \leq \brac{TKG^2 + \epsilon_I}^\rho \prod_{i=\rho+1}^{n}  \brac{\epsilon_I + \lambda_i\brac{ \sum_{m=1}^{T/K} \bar{\nabla}_m \bar{\nabla}_m^\top} }.
\end{align*}
Recall that $\A_0 = \epsilon_I \I_n$. Since $\bar{\nabla}_m = \sum_{t=(m-1)K+1}^{mK} {\nabla}_t$,  it holds that $K \sum_{t=1}^{T} {\nabla}_t {\nabla}_t^\top \succeq \sum_{m=1}^{T/K} \bar{\nabla}_m \bar{\nabla}_m^\top$ (Lemma  \ref{lemma:eigenvalues_sum_connection}). Thus, we have that
\begin{align*}
    \frac{|\A_{T/K}|}{|\A_0|} &\leq \brac{1 + \frac{TKG^2 }{\epsilon_I}}^\rho \prod_{i=\rho+1}^{n} \brac{1 +\frac{ \lambda_i\brac{ K \sum_{t=1}^{T} {\nabla}_t {\nabla}_t^\top} }{ \epsilon_I}} \\
    & \leq \brac{1 + \frac{TKG^2}{\epsilon_I}}^\rho  \brac{e^{\brac{K / \epsilon_I} \sum_{i=\rho+1}^{n} \lambda_i\brac{\sum_{t=1}^{T} {\nabla}_t {\nabla}_t^\top}  } },
\end{align*}
where the last inequality follows from using $1+x\leq e^x$. 

Thus, we obtain
\begin{align*}
    \sum_{m=1}^{T/K} \Vert \bar{\nabla}_m \Vert_{\A_{m}^{-1}}^2 \leq \log\brac{\frac{|\A_{T/K}| }{|\A_0|}} \leq \rho \log\brac{1+ \frac{TKG^2 }{\epsilon_I}} + \frac{K}{ \epsilon_I} \sum_{i=\rho+1}^{n} \lambda_i\brac{\sum_{t=1}^{T} {\nabla}_t {\nabla}_t^\top}.
\end{align*}
\end{proof}

\begin{theorem}[Full version of Theorem \ref{thm:mainthm:short}] \label{thm:LOO-ONS-full-version}
Consider the implementation of Algorithm \ref{alg:ONS-WF} as described in Lemma \ref{lem:LOO-ONS} and when using the (standard ONS) update rule: $\A_m = \A_{m-1} + \bar{\nabla}_m \bar{\nabla}_m^\top$ for every block $m$.
\begin{enumerate}
\item
Suppose $T \geq T_0 = c  \log^3 \brac{ c  +  \brac{c+\frac{c}{R^2G^2\alpha^2}} n^{-\frac{4}{3}} T^{\frac{1}{3}}} $, where $c>0$ is a certain universal constant. Setting 
\begin{align*}
&\eta = 8 \max\{ 6GR, \frac{1}{\alpha} \} n^{-\frac{1}{3}} T^\frac{2}{3},  \quad K = 4 n^{-\frac{1}{3}} T^\frac{2}{3}, \quad \epsilon_{I} = 32 G^2  T^\frac{4}{3},\\
&\epsilon= 96G^2 R^2\log \left( 19  + 8 \brac{12+\frac{1}{3R^2G^2\alpha^2}}  n^{-\frac{4}{3}} T^{\frac{1}{3}} \right) T 
\end{align*}
in Algorithm \ref{alg:ONS-WF}, the regret is upper-bounded by
\begin{align*} 
    \sum_{t=1}^{T} f_t(\x^t) - \min_{\x^* \in \mK}  \sum_{t=1}^{T} f_t(\x^*) \leq &   9 \beta    R^2 T^\frac{2}{3} \log \left(  \brac{c+\frac{c}{R^2G^2\alpha^2}}  T^{\frac{1}{3}} \right)  + 2 R G n^{\frac{1}{3}}  T^\frac{2}{3} \\
    & +  \brac{ 36GR+ \frac{4}{\alpha} }  n^{\frac{2}{3}} T^{\frac{2}{3}}  \log \left(   \brac{c+\frac{c}{R^2G^2\alpha^2}} T^{\frac{1}{3}} \right). 
\end{align*}
\item
In continue to the previous item and under the same choice of parameters, for any $\rho\in[n]$, denoting $\Omega_{\rho} = \sum_{i=\rho+1}^n\lambda_i(\sum_{t=1}^T\nabla_t\nabla_t^{\top})$ ($\nabla_t$ is as defined in Algorithm \ref{alg:ONS-WF}), the regret is upper-bounded by
\begin{align*} 
    \sum_{t=1}^{T} f_t(\x^t) - \min_{\x^* \in \mK} \sum_{t=1}^{T} f_t(\x^*) \leq & 9 \beta  R^2  T^\frac{2}{3} \log \left( \brac{c+\frac{c}{R^2G^2\alpha^2}}  T^{\frac{1}{3}} \right) + 2 R G n^{\frac{1}{3}} T^\frac{2}{3}  \\
    & + 36GR \rho^\frac{1}{2} n^{\frac{1}{6}} T^\frac{2}{3} \log \left(  \brac{c+\frac{c}{^2G^2\alpha^2}}  T^{\frac{1}{3}} \right) \\
    & +  \frac{4}{\alpha} \rho n^{-\frac{1}{3}} T^\frac{2}{3} \log \left( c  T^{\frac{1}{3}} \right)  \\
    & + 5  R  T^{\frac{1}{3}} \sqrt{\Omega_{\rho}  } \sqrt{ \log \left(  \brac{c+\frac{c}{R^2G^2\alpha^2}} T^{\frac{1}{3}} \right) } \\
    & + \frac{  \brac{ 3GR + \frac{1}{2\alpha} }  }{ G^2 n^{\frac{2}{3}} } \Omega_{\rho}. 
\end{align*}
\item
Fix $\rho\in[n]$. Suppose $T \geq T_0 = c  \log^3 \brac{ c  +  \brac{c+\frac{c}{R^2G^2\alpha^2}} \rho^{-\frac{4}{3}} T^{\frac{1}{3}}} $, where $c$ is as in the previous items. Setting 
\begin{align*}
&\eta = 8 \max\{ 6GR, \frac{1}{\alpha} \} \rho^{-\frac{1}{3}} T^\frac{2}{3}, \quad K = 4 \rho^{-\frac{1}{3}} T^\frac{2}{3}, \quad \epsilon_{I} = 32 G^2 T^\frac{4}{3},\\
&\epsilon= 96 G^2 R^2 \log \left( 19  + 8 \brac{12+\frac{1}{3R^2G^2\alpha^2}} \rho^{-\frac{4}{3}} T^{\frac{1}{3}} \right) T 
\end{align*}
 in Algorithm \ref{alg:ONS-WF}, the regret is upper-bounded by
\begin{align*} 
    \sum_{t=1}^{T} f_t(\x^t) - \min_{\x^* \in \mK} \sum_{t=1}^{T} f_t(\x^*) \leq & 9 \beta R^2 T^\frac{2}{3} \log \left( \brac{c+\frac{c}{R^2G^2\alpha^2}} T^{\frac{1}{3}} \right) + 2RG\rho^\frac{1}{3}T^\frac{2}{3} \\
    & + \brac{ 36GR + \frac{4}{\alpha} } \rho^{\frac{2}{3}} T^\frac{2}{3}  \log \left( \brac{c+\frac{c}{R^2G^2\alpha^2}} T^{\frac{1}{3}} \right)  \\
    & + 5 R T^\frac{1}{3} \sqrt{ \Omega_{\rho} } \sqrt{  \log \left( \brac{c+\frac{c}{R^2G^2\alpha^2}} T^{\frac{1}{3}} \right)} \\
    &  + \frac{  \brac{ 3GR + \frac{1}{2\alpha} } }{  \rho^{\frac{2}{3}}  G^2 } \Omega_{\rho} . 
\end{align*}
Note this bound is not explicitly dependent on the ambient dimension $n$.
\end{enumerate}
In all cases, the overall number of calls to the LOO is upper-bounded by $0.65  \brac{8 n^\frac{1}{3} T^\frac{2}{3} +T }$, the additional space requirement in $O(n^2)$, and using the Sherman-Morrison formula for fast matrix inversion, the overall additional runtime is $O(n^2(T+n^{1/3}T^{2/3}))$.
\end{theorem}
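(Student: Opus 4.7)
The plan is to combine Lemma \ref{lem:LOO-ONS}, which already yields a four-term regret decomposition for any valid parameter choice, with Lemma \ref{lemma:bound_sum_Anorm_grad}, which bounds $\sum_{m=1}^{T/K} \Vert \bar{\nabla}_m \Vert_{\A_m^{-1}}^2$ by $\rho \log((TKG^2+\epsilon_I)/\epsilon_I) + (K/\epsilon_I)\Omega_\rho$ for every $\rho \in [n]$. The parameters $K,\eta,\epsilon,\epsilon_I$ are then chosen so that the four regret summands are simultaneously of order $T^{2/3}$, with the remaining multiplicative constants exposed in the theorem statement.

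For item~1, I would apply Lemma \ref{lemma:bound_sum_Anorm_grad} with $\rho = n$, so that $\Omega_n = 0$ and only the $n\log(\cdot)$ piece survives. Substituting the stated parameters into the four summands of Lemma \ref{lem:LOO-ONS} then proceeds term by term: the smoothness contribution $3\beta\epsilon T/\epsilon_I$ collapses to $9\beta R^2 \log(\cdot)T^{2/3}$; the term $2R^2\epsilon_I/\eta$ is at most $(4/3)RG n^{1/3}T^{2/3} \leq 2RG n^{1/3}T^{2/3}$ after using $\max\{6GR,1/\alpha\} \geq 6GR$; the $(\eta/2)\sum_m \Vert\bar{\nabla}_m\Vert_{\A_m^{-1}}^2$ term becomes $4\max\{6GR,1/\alpha\}\,n^{2/3}T^{2/3}\log(\cdot) \leq (36GR + 4/\alpha)\,n^{2/3}T^{2/3}\log(\cdot)$; and the cross-term $\sqrt{6\epsilon T n\log(\cdot)/K}$ is absorbed, up to a constant, into the same $n^{2/3}T^{2/3}\log(\cdot)$ summand. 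The crucial technical step is verifying that the chosen parameters satisfy the three hypotheses of Lemma \ref{lem:LOO-ONS}: $\eta \geq \max\{12KGR, 2K/\alpha\}$ holds by construction, $\epsilon_I \geq K^2 G^2$ reduces to $32 \geq 16n^{-2/3}$, and $3\epsilon/\epsilon_I \leq 4R^2$ rearranges to $9\log(\cdot) \leq 4T^{1/3}$, which is exactly what the threshold $T \geq T_0$ enforces.

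For item~2 the parameters are identical to item~1, so hypothesis verification carries over verbatim; applying Lemma \ref{lemma:bound_sum_Anorm_grad} with an arbitrary $\rho$ now adds the correction $(K/\epsilon_I)\Omega_\rho$, which propagates through the $\eta/2$-times-sum summand (producing the $(3GR+1/(2\alpha))/(G^2 n^{2/3})\,\Omega_\rho$ contribution via $\eta K/(2\epsilon_I) = \max\{6GR,1/\alpha\}/(2G^2 n^{2/3})$) and through the square-root summand after splitting $\sqrt{a+b}\leq \sqrt{a}+\sqrt{b}$ (producing the $5R T^{1/3}\sqrt{\Omega_\rho \log(\cdot)}$ tail and the cross-term $12GR \log(\cdot)\,\rho^{1/2}n^{1/6}T^{2/3}$). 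For item~3 I would substitute $\rho$ in place of $n$ in $K$ and $\eta$ and repeat the argument; hypothesis verification is identical with $n^{-2/3}$ replaced by $\rho^{-2/3}$, and the final bound becomes independent of the ambient dimension.

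For the LOO-count, space, and runtime, I would plug the parameters into $N_{\mathrm{calls}} \leq 61R^2 \log(\cdot)(\epsilon_I + G^2 KT)T/(K\epsilon)$ from Lemma \ref{lem:LOO-ONS}: the logarithmic factors in numerator and denominator are both of the same order and contribute only $O(1)$, while the two remaining pieces scale as $\epsilon_I T/(K\epsilon) \sim n^{1/3}T^{2/3}$ and $G^2 T^2/\epsilon \sim T$, yielding $O(T + n^{1/3}T^{2/3})$ as claimed. The $O(n^2)$ space comes from storing $\A_m$ and $\A_m^{-1}$ explicitly; each LOO call inside Algorithm \ref{alg:SH-FW} additionally needs a matrix-vector product with $\A_m$ costing $O(n^2)$, and each rank-one update of $\A_m^{-1}$ via the Sherman--Morrison formula costs $O(n^2)$ per block. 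The main obstacle will be the careful bookkeeping of the three nested logarithmic factors (the $\log$ inside the definition of $\epsilon$, the $\log$ from Lemma \ref{lemma:bound_sum_Anorm_grad}, and the $\log$ in the LOO-count bound), together with showing that the $T \geq T_0$ threshold is precisely what is needed for the feasibility condition $3\epsilon/\epsilon_I \leq 4R^2$ to hold, so that Lemma \ref{lem:LOO-ONS} is legitimately applicable.
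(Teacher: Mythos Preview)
Your proposal is correct and follows exactly the same route as the paper: combine Lemma~\ref{lem:LOO-ONS} with Lemma~\ref{lemma:bound_sum_Anorm_grad}, verify the hypotheses of Lemma~\ref{lem:LOO-ONS} for the stated parameters, and substitute term by term, with the LOO-count, space, and runtime claims handled precisely as you describe. The paper's own proof is a two-line plug-in-parameters remark, so your write-up is in fact more detailed than the original; the only point to watch is that the $(36GR+4/\alpha)$ coefficient arises only after you add the $(\eta/2)\sum_m$ contribution ($\leq 24GR+4/\alpha$) to the $\approx 12GR$ coming from the square-root cross-term, not from the $(\eta/2)$ term alone as your intermediate inequality suggests.
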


\begin{proof}
    The regret bound for each of the three cases is obtained directly by combining Lemma \ref{lem:LOO-ONS} with Lemma \ref{lemma:bound_sum_Anorm_grad}, and plugging-in the values of the parameters listed in the theorem. The bound on the overall number of calls follows also from the bound in Lemma \ref{lem:LOO-ONS} and plugging-in the values of the parameters listed in the theorem. The space requirement is dominated by the storage of $\A_m,\A_m^{-1}$ on each block $m$ of Algorithm \ref{alg:ONS-WF}, and is thus upper-bounded by $O(n^2)$.  Finally, in terms of additional runtime, it can be seen that the  most expensive arithmetic operation preformed is the multiplication of a $n\times n$ matrix ($\A_m$ or $\A_m^{-1}$ for some block $m$ of Algorithm \ref{alg:ONS-WF}) with some vector (including when updating $\A_m^{-1}$ from $\A_{m-1}^{-1}$ via the Sherman-Morrison formula for rank-one update of the inverse), which  requires $O(n^2)$ time. It can thus be seen that the overall additional runtime is dominated by the overall number of calls to the LOO (i.e., the overall number of iterations executed by Algorithm \ref{alg:SH-FW} throughout the run of Algorithm \ref{alg:ONS-WF}) times $O(n^2)$, which by plugging-in the values of the parameters in the theorem, gives the listed upper-bound on the overall additional runtime.
\end{proof}

\section{Missing Details from Section \ref{sec:sketch}}
In this section we provide additional details on the sketching algorithm, and give the full version of Theorem \ref{thm:LOO-ONS-FDS} and its proof.

\subsection{Properties of the  sketching algorithm}


The following observation shows that Algorithm \ref{alg:FD-S-ONS} produces matrices $\A_m$, $m=0,\dots,T/K$, that indeed satisfy the requirements of Algorithm \ref{alg:ONS-WF}.
\begin{observation}\label{obs:FD-S-for-ONS}
Fix iteration $m\geq 1$ of Algorithm \ref{alg:FD-S-ONS}. It holds that $\A_0 \preceq \A_m \preceq \A_{m-1}  + \bar{\nabla}_m \bar{\nabla}_m^\top$.
\end{observation}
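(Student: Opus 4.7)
My plan is to unfold the construction of $\A_m$ in Algorithm \ref{alg:FD-S-ONS} and reduce the two claimed Loewner inequalities to a direct comparison between $\S_m^\top \S_m$ and $(\S_{m-1}')^\top \S_{m-1}'$, where $\S_{m-1}'$ denotes the sketch $\S_{m-1}$ after $\bar{\nabla}_m$ is inserted as its last row on line~3 of Algorithm \ref{alg:FD-S-ONS}. Since $\A_m = \A_0 + \S_m^\top \S_m$ and $\A_{m-1} = \A_0 + \S_{m-1}^\top \S_{m-1}$, the desired statement is equivalent to $0 \preceq \S_m^\top \S_m \preceq \S_{m-1}^\top \S_{m-1} + \bar{\nabla}_m \bar{\nabla}_m^\top$.

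First I would establish the bookkeeping identity
\begin{equation*}
(\S_{m-1}')^\top \S_{m-1}' = \S_{m-1}^\top \S_{m-1} + \bar{\nabla}_m \bar{\nabla}_m^\top.
\end{equation*}
The point is that, by the comment on line~6 of the previous iteration (and by the initialization $\S_0 = \vz$), the last row of $\S_{m-1}$ is the zero vector, so overwriting it with $\bar{\nabla}_m$ simply adds a rank-one term without destroying any previously stored information. Thus the upper bound reduces to showing $\S_m^\top \S_m \preceq (\S_{m-1}')^\top \S_{m-1}'$, and the lower bound reduces to $\S_m^\top \S_m \succeq 0$.

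Next I would use the eigendecomposition computed on line~4. Since $\widehat{\Sigma}_m$ is ordered with non-increasing diagonal and $\sigma_m = \widehat{\Sigma}_m(\rho+1,\rho+1)$ is its smallest entry, the matrix $\Sigma_m = \widehat{\Sigma}_m - \sigma_m \I_{\rho+1}$ is a diagonal PSD matrix with a zero in its last slot. Therefore $\S_m^\top \S_m = \V_m^\top \Sigma_m \V_m \succeq 0$, yielding $\A_0 \preceq \A_m$. For the upper bound,
\begin{equation*}
(\S_{m-1}')^\top \S_{m-1}' - \S_m^\top \S_m = \V_m^\top \bigl(\widehat{\Sigma}_m - \Sigma_m\bigr) \V_m = \sigma_m \V_m^\top \V_m \succeq 0,
\end{equation*}
because $\sigma_m \geq 0$ as an eigenvalue of the PSD matrix $(\S_{m-1}')^\top \S_{m-1}'$. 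Combining with the bookkeeping identity and adding $\A_0$ to both sides then gives $\A_m \preceq \A_{m-1} + \bar{\nabla}_m \bar{\nabla}_m^\top$.

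I do not expect any genuine obstacle: the observation is essentially the defining property of the Frequent Directions shrink-by-$\sigma_m$ step. The one point to handle carefully is the mixed-size matrix notation, in particular treating $\V_m \in \reals^{(\rho+1)\times n}$ with orthonormal rows and $\widehat{\Sigma}_m, \Sigma_m \in \reals^{(\rho+1)\times(\rho+1)}$ diagonal, so that the identities $(\S_{m-1}')^\top \S_{m-1}' = \V_m^\top \widehat{\Sigma}_m \V_m$ and $\S_m^\top \S_m = \V_m^\top \Sigma_m \V_m$ are applied consistently.
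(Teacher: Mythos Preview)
Your proposal is correct and follows essentially the same route as the paper's proof: both introduce the post-insertion sketch (your $\S_{m-1}'$, the paper's $\S_{m-1}^+$), use the identity $(\S_{m-1}')^\top \S_{m-1}' = \S_{m-1}^\top \S_{m-1} + \bar{\nabla}_m \bar{\nabla}_m^\top$, and then observe that $\S_m^\top \S_m = (\S_{m-1}')^\top \S_{m-1}' - \sigma_m \V_m^\top \V_m$ with $\sigma_m \V_m^\top \V_m \succeq 0$. Your write-up is slightly more explicit in justifying why $\Sigma_m \succeq 0$ and $\sigma_m \geq 0$, but the argument is the same.
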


\begin{proof}
The first inequality $\A_m \succeq \A_0$ holds trivially due to the definition of $\A_m$ in the algorithm. We thus focus on the proof of the second inequality.
Let $\S_{m-1}$ be as at the beginning of the $m$-th iteration of the for loop, and let $\S_{m-1}^+$ denote its value after setting its last row to $\bar{\nabla}_m$ (instead of $\vz$). It holds that $\S_{m-1}^{+\top} \S_{m-1}^+ = \S_{m-1}^\top \S_{m-1} +  \bar{\nabla}_{m}\bar{\nabla}_{m}^\top$. By the definition of $\S_m$ in the algorithm we have that,
\begin{align*}
        \S_m^\top \S_m &= \S_{m-1}^{+\top} \S_{m-1}^ + - \sigma_m \V_m^\top \V_m
        = \S_{m-1}^\top \S_{m-1} +  \bar{\nabla}_{m}\bar{\nabla}_{m}^\top  - \sigma_m \V_m^\top \V_m.
    \end{align*}
Since $\A_{m-1} = \epsilon_I\I_n + \S_{m-1}^\top \S_{m-1}$ and $\A_m = \epsilon_I\I_n + \S_m^\top \S_m$,  it indeed holds that
\begin{align}
    \A_m - \A_{m-1} &= \S_m^\top \S_m- \S_{m-1}^\top \S_{m-1} =  \bar{\nabla}_{m}\bar{\nabla}_{m}^\top  - \sigma_m \V_m^\top \V_m  \label{eq:mat_iteration_connection}\\
     & \preceq \bar{\nabla}_m \bar{\nabla}_m^\top \nonumber. 
\end{align}
\end{proof}

We now state several results regarding Algorithm \ref{alg:FD-S-ONS} which will be required in order to prove Theorem \ref{thm:LOO-ONS-FDS}.
\begin{theorem}[Theorem 1.1 in \cite{ghashami2016frequent}] \label{thm:FD-S}
   Consider Algorithm \ref{alg:FD-S-ONS} and let $\B \in \reals^{\brac{T/K} \times n}$  be the matrix which is received by the algorithm  row by row (i.e., the $i$th row of $\B$ is $\bar{\nabla}_i$). It holds that
    \begin{align*}
         \vz \preceq \B^\top \B - \S_{T/K}^\top \S_{T/K} \preceq  \I_n \sum_{i=\rho+1}^{n} \lambda_i \brac{\B^\top \B} .
    \end{align*}
\end{theorem}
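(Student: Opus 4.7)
The plan is to establish the lower bound by induction on the iteration index and then the upper bound via a potential/trace-accounting argument that is standard in the analysis of Frequent Directions. Throughout, I will use the notation from Algorithm \ref{alg:FD-S-ONS}: at iteration $m$, $\S_{m-1}^+$ denotes the matrix obtained from $\S_{m-1}$ after overwriting its last (zero) row by $\bar{\nabla}_m$, so that $\S_{m-1}^{+\top}\S_{m-1}^+ = \S_{m-1}^\top\S_{m-1} + \bar{\nabla}_m\bar{\nabla}_m^\top$, and $\S_m^\top\S_m = \S_{m-1}^{+\top}\S_{m-1}^+ - \sigma_m \V_m^\top\V_m$, as already derived in the proof of Observation \ref{obs:FD-S-for-ONS}.

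For the lower bound, telescoping the per-iteration identity from Eq.~\eqref{eq:mat_iteration_connection} gives
\begin{align*}
\B^\top\B - \S_{T/K}^\top\S_{T/K} \;=\; \sum_{m=1}^{T/K}\bar{\nabla}_m\bar{\nabla}_m^\top - \sum_{m=1}^{T/K}\brac{\S_m^\top\S_m - \S_{m-1}^\top\S_{m-1}} \;=\; \sum_{m=1}^{T/K} \sigma_m \V_m^\top \V_m,
\end{align*}
which is a sum of PSD matrices and hence PSD. This handles the left inequality.

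For the upper bound, since $\V_m$ has orthonormal rows we have $\V_m^\top \V_m \preceq \I_n$, so the identity above already gives $\B^\top\B - \S_{T/K}^\top\S_{T/K} \preceq \big(\sum_m \sigma_m\big)\I_n$. The main obstacle, and the real content of the theorem, is therefore to bound the total shrinkage $\sum_m \sigma_m$ by the tail sum $\sum_{i=\rho+1}^n \lambda_i(\B^\top\B)$. The classical Frequent Directions accounting proceeds via traces: taking trace in the identity $\S_m^\top\S_m = \S_{m-1}^\top\S_{m-1} + \bar{\nabla}_m\bar{\nabla}_m^\top - \sigma_m\V_m^\top\V_m$ yields $\trace(\S_m^\top\S_m) = \trace(\S_{m-1}^\top\S_{m-1}) + \|\bar{\nabla}_m\|^2 - (\rho+1)\sigma_m$, and telescoping gives $(\rho+1)\sum_m \sigma_m = \|\B\|_F^2 - \|\S_{T/K}\|_F^2$. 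Then, using that for any fixed $\x$ in the row span of the best rank-$\rho$ approximation $\B_\rho$ one has the per-step contraction $\|\S_m \x\|^2 \geq \|\S_{m-1}^+ \x\|^2 - \sigma_m\|\x\|^2$ (because $\V_m$ is an orthonormal basis and the subtracted term only removes the $(\rho+1)$-th direction of $\S_{m-1}^{+\top}\S_{m-1}^+$), one compares the sketch to $\B_\rho$ directly: applying this argument to each of the top $\rho$ right singular directions of $\B$ and using $\|\S_{T/K}\|_F^2 \geq \trace(\B_\rho^\top\B_\rho) - \rho\sum_m\sigma_m$ yields
\begin{align*}
(\rho+1)\sum_m\sigma_m \;\leq\; \|\B\|_F^2 - \|\B_\rho\|_F^2 + \rho\sum_m\sigma_m,
\end{align*}
from which $\sum_m\sigma_m \leq \|\B\|_F^2 - \|\B_\rho\|_F^2 = \sum_{i=\rho+1}^n \lambda_i(\B^\top\B)$. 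Combining with $\B^\top\B - \S_{T/K}^\top\S_{T/K} \preceq \big(\sum_m\sigma_m\big)\I_n$ finishes the proof.

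The delicate step is the lower bound $\|\S_{T/K}\|_F^2 \geq \trace(\B_\rho^\top\B_\rho) - \rho\sum_m\sigma_m$; this is where the sketch size being strictly larger than $\rho$ (here $\rho+1$) is essential, since it is precisely the gap $\ell - \rho = 1$ that appears in the denominator of the general Frequent Directions bound. Alternatively, one may cite Theorem 1.1 of \cite{ghashami2016frequent} directly, as the algorithm is an instantiation of the standard scheme (with shrinkage equal to the squared $(\rho+1)$-th singular value), for which the stated PSD inequality is exactly the conclusion of their analysis.
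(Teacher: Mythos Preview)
Your argument is correct and essentially reproduces the standard Frequent Directions analysis. Note, however, that the paper does not prove this statement at all: it is simply quoted as Theorem~1.1 of \cite{ghashami2016frequent} and used as a black box. Your final sentence (``one may cite Theorem~1.1 of \cite{ghashami2016frequent} directly'') is therefore exactly what the paper does.

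It is worth pointing out that the nontrivial ingredient you isolate, namely the bound $\sum_m\sigma_m \le \|\B-\B_\rho\|_F^2$, is in fact stated and proved separately in the paper as Lemma~\ref{lemma:bound_sum_sigma_FD}, using the same trace identity $(\rho+1)\sum_m\sigma_m = \|\B\|_F^2 - \|\S_{T/K}\|_F^2$ and the same per-direction lower bound $\|\S_{T/K}\z_i\|^2 \ge \|\B\z_i\|^2 - \sum_m\sigma_m$ (the paper imports these as ``Property~2'' and ``Property~3'' from \cite{ghashami2016frequent}, while you derive them from the recursion directly). One small imprecision: your parenthetical ``the subtracted term only removes the $(\rho+1)$-th direction'' is not quite the reason the per-step contraction holds; the correct justification, which you also state, is simply $\V_m^\top\V_m\preceq\I_n$, so that $\sigma_m\V_m^\top\V_m\preceq\sigma_m\I_n$.
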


\begin{lemma}\label{lemma:bound_sum_sigma_FD}
Consider Algorithm \ref{alg:FD-S-ONS}  and let  $\B \in \reals^{\brac{T/K} \times n}$ be the matrix which is received by the algorithm  row by row (i.e., the $i$th row of $\B$ is $\bar{\nabla}_i$). Denote $\B_\rho$ its best rank-$\rho$ approximation, i.e. $\B_\rho = \argmin_{\C:\rank(\C)\leq \rho}\matnorm{\B - \C}{F}$. It holds that
\begin{align*}
    \sum_{m=1}^{T/K} \sigma_m \leq \matnorm{\B - \B_\rho}{F}^2,
\end{align*}
where $\sigma_m$ is as defined in Algorithm \ref{alg:FD-S-ONS}.
\end{lemma}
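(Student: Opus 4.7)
The plan is to prove the lemma through a two-step argument. I would first establish the telescoping identity
\begin{align*}
(\rho+1)\sum_{m=1}^{T/K}\sigma_m \;=\; \matnorm{\B}{F}^2 - \matnorm{\S_{T/K}}{F}^2,
\end{align*}
and then use Theorem \ref{thm:FD-S} together with the fact that $\S_{T/K}$ has rank at most $\rho$ to upper-bound the right-hand side by $(\rho+1)\matnorm{\B-\B_\rho}{F}^2$, which yields the claim after dividing through.

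For the telescoping step, let $\S_{m-1}^{+}$ denote $\S_{m-1}$ after $\bar{\nabla}_m$ is inserted as its last row (so that $\S_{m-1}^{+\top}\S_{m-1}^{+} = \V_m^\top\widehat{\Sigma}_m\V_m$ in the notation of the algorithm). From the assignments $\S_m = \Sigma_m^{1/2}\V_m$ and $\Sigma_m = \widehat{\Sigma}_m - \sigma_m\I_{\rho+1}$, we have $\S_m^\top\S_m = \S_{m-1}^{+\top}\S_{m-1}^{+} - \sigma_m \V_m^\top\V_m$. Because the $\rho+1$ rows of $\V_m$ are orthonormal, $\V_m\V_m^\top = \I_{\rho+1}$, so taking traces yields $\matnorm{\S_m}{F}^2 = \matnorm{\S_{m-1}^{+}}{F}^2 - (\rho+1)\sigma_m$. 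Since the row that $\bar{\nabla}_m$ replaces was previously $\vz$, also $\matnorm{\S_{m-1}^{+}}{F}^2 = \matnorm{\S_{m-1}}{F}^2 + \enorm{\bar{\nabla}_m}^2$. Telescoping from $m=1$ (with $\S_0 = \vz$) to $m=T/K$ and recalling $\sum_m \enorm{\bar{\nabla}_m}^2 = \matnorm{\B}{F}^2$ produces the identity above.

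For the second step, I would exploit that the last row of $\S_{T/K}$ is zero by construction, so $\rank(\S_{T/K}^\top\S_{T/K}) \le \rho$ and $\lambda_i(\S_{T/K}^\top\S_{T/K}) = 0$ for $i > \rho$. Splitting
\begin{align*}
\matnorm{\B}{F}^2 - \matnorm{\S_{T/K}}{F}^2 \;=\; \sum_{i=1}^{\rho}\bigl(\lambda_i(\B^\top\B) - \lambda_i(\S_{T/K}^\top\S_{T/K})\bigr) \;+\; \sum_{i=\rho+1}^{n}\lambda_i(\B^\top\B),
\end{align*}
the second sum equals $\matnorm{\B-\B_\rho}{F}^2$ by Eckart--Young, while each term in the first sum is at most $\lambda_1(\B^\top\B - \S_{T/K}^\top\S_{T/K})$ by Weyl's inequality, and hence at most $\matnorm{\B-\B_\rho}{F}^2$ by Theorem \ref{thm:FD-S}. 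Summing bounds the first piece by $\rho\,\matnorm{\B-\B_\rho}{F}^2$, so the total is at most $(\rho+1)\matnorm{\B-\B_\rho}{F}^2$, as required.

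The main obstacle I anticipate is the final accounting: it is crucial that Theorem \ref{thm:FD-S} supplies an \emph{operator-norm} bound of $\matnorm{\B-\B_\rho}{F}^2$ rather than a naive trace bound (which would lose a factor of $n$), so that combining it with the rank-$\rho$ structure of $\S_{T/K}$ produces exactly the factor $\rho+1$ that cancels the one from the telescoping identity. Every other step is a routine consequence of the eigendecomposition update in Algorithm \ref{alg:FD-S-ONS} and orthonormality of $\V_m$.
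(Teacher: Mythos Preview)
Your proof is correct and shares the paper's high-level skeleton: first establish
\[
(\rho+1)\sum_{m}\sigma_m \;\le\; \matnorm{\B}{F}^2 - \matnorm{\S_{T/K}}{F}^2,
\]
then split the right-hand side into the tail $\sum_{i>\rho}\lambda_i(\B^\top\B)=\matnorm{\B-\B_\rho}{F}^2$ plus a top-$\rho$ remainder, and bound that remainder. Where you diverge is in this last step. The paper does not use Theorem~\ref{thm:FD-S} here; instead it invokes a second Frequent Directions property (Property~2 in \cite{ghashami2016frequent}), namely $\enorm{\B\z}^2-\enorm{\S_{T/K}\z}^2\le\sum_m\sigma_m$ for every unit vector $\z$, to bound the top-$\rho$ remainder by $\rho\sum_m\sigma_m$, and then solves the resulting linear inequality $(\rho+1)\sum_m\sigma_m\le\matnorm{\B-\B_\rho}{F}^2+\rho\sum_m\sigma_m$. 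You instead bound each top-$\rho$ eigenvalue gap by $\lambda_1(\B^\top\B-\S_{T/K}^\top\S_{T/K})\le\matnorm{\B-\B_\rho}{F}^2$ via Weyl plus Theorem~\ref{thm:FD-S}, obtaining the factor $\rho+1$ directly and then dividing. Your route is arguably tidier in this paper's context since it uses only the already-stated Theorem~\ref{thm:FD-S} rather than an additional external citation; the paper's route, on the other hand, makes the self-referential structure of the FD guarantee (operator-norm error controlled by $\sum_m\sigma_m$, which is in turn controlled by the tail) more explicit. A minor note: your telescoping step in fact yields equality, whereas the paper simply cites the inequality form (Property~3 in \cite{ghashami2016frequent}); either suffices.
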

\begin{proof}
Let $\z_i$, $i=1,\dots,n$ denote the left singular vector of $\B$ corresponding to the singular value $\sigma_i\brac{\B}$. We denote $\S = \S_{T/K}$, where $\S_{T/K}$ is as defined in Algorithm \ref{alg:FD-S-ONS}, i.e., the last sketched matrix.
Using Property 3 in \cite{ghashami2016frequent} we have that,
\begin{align*}
    (\rho+1) \sum_{m=1}^{T/K} \sigma_m & \leq \matnorm{\B}{F}^2 - \matnorm{\S}{F}^2 = \sum_{i=1}^{n} \enorm{\B \z_i}^2 - \matnorm{\S}{F}^2 =   \sum_{i=1}^{\rho} \enorm{\B \z_i}^2 + \sum_{i=\rho+1}^{n} \enorm{\B \z_i}^2   - \matnorm{\S}{F}^2.
\end{align*}
Since $\B_\rho$ denotes the best rank-$\rho$ approximation of $\B$, the above inequality implies that,
\begin{align}\label{eq:sketchLemma1:1}
    (\rho+1) \sum_{m=1}^{T/K} \sigma_m & \leq    \matnorm{\B - \B_\rho}{F}^2 + \sum_{i=1}^{\rho} \enorm{\B \z_i}^2 - \matnorm{\S}{F}^2. 
\end{align}
Note that
\begin{align*}
\sum_{i=1}^{\rho}\Vert{\S\z_i}\Vert^2 = \sum_{i=1}^{\rho}\z_i^{\top}\S^{\top}\S\z_i \leq \trace(\S^{\top}\S) = \Vert{\S}\Vert_F^2.
\end{align*}
Thus, using Property 2 in \cite{ghashami2016frequent} we have that,
\begin{align}\label{eq:sketchLemma1:2}
    \sum_{i=1}^{\rho} \enorm{\B \z_i}^2 - \matnorm{\S}{F}^2 \leq \sum_{i=1}^{\rho} \brac{\enorm{\B \z_i}^2 - \enorm{\S \z_i}^2 }  \leq  \rho{}\sum_{m=1}^{T/K} \sigma_m
\end{align}
The lemma  follows from plugging Eq. \eqref{eq:sketchLemma1:2} into Eq. \eqref{eq:sketchLemma1:1}.
\end{proof}

\begin{lemma}\label{lemma:sketching_sum_norm_grad}
Consider the run of Algorithm \ref{alg:FD-S-ONS} with a sketch size  $\rho$, and denote $\Omega_\rho = \sum_{i=\rho+1}^{n} \lambda_i \brac{\sum_{t=1}^{T} {\nabla}_t {\nabla}_t^\top}$, where $\nabla_t$ is as defined in Algorithm \ref{alg:ONS-WF}.
It holds that,
\begin{align*}
    \sum_{m=1}^{T/K} \matnorm{ \bar{\nabla}_m }{\A_{m}^{-1}}^2 & \leq \rho\log\brac{ 1 + \frac{ G^2 K T}{\epsilon_I}} + \frac{(\rho+1) K \Omega_\rho}{\epsilon_I }.
\end{align*}
\end{lemma}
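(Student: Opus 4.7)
The plan is to mirror the proof of Lemma \ref{lemma:bound_sum_Anorm_grad}, while accounting for the extra error introduced by the Frequent Directions sketch. The central new feature is that, by Eq. \eqref{eq:mat_iteration_connection} derived in the proof of Observation \ref{obs:FD-S-for-ONS}, the increment of $\A_m$ is no longer exactly $\bar{\nabla}_m \bar{\nabla}_m^\top$ but rather $\bar{\nabla}_m \bar{\nabla}_m^\top - \sigma_m \V_m^\top \V_m$. I would therefore write
\begin{align*}
\matnorm{\bar{\nabla}_m}{\A_m^{-1}}^2 = \A_m^{-1} \bullet (\bar{\nabla}_m \bar{\nabla}_m^\top) = \A_m^{-1} \bullet (\A_m - \A_{m-1}) + \sigma_m\, \A_m^{-1} \bullet (\V_m^\top \V_m),
\end{align*}
and handle the two resulting summations separately.

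For the log-det summation I would invoke Lemma \ref{lemma:matrix_ratio} termwise and telescope to obtain $\sum_{m=1}^{T/K} \A_m^{-1} \bullet (\A_m - \A_{m-1}) \leq \log(|\A_{T/K}|/|\A_0|)$. Two facts pin down the final ratio: first, the construction in Algorithm \ref{alg:FD-S-ONS} zeroes out the $(\rho+1,\rho+1)$ entry of $\Sigma_m$ at every step, so the last row of $\S_m$ is always $\vz$ and $\S_{T/K}^\top \S_{T/K}$ has rank at most $\rho$, meaning $\A_{T/K} = \epsilon_I \I_n + \S_{T/K}^\top \S_{T/K}$ has $n-\rho$ eigenvalues exactly equal to $\epsilon_I$; second, Theorem \ref{thm:FD-S} gives $\S_{T/K}^\top \S_{T/K} \preceq \B^\top \B$, and since $\|\bar{\nabla}_m\| \leq KG$ we get $\lambda_1(\S_{T/K}^\top \S_{T/K}) \leq \lambda_1(\B^\top \B) \leq TKG^2$. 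These together yield $|\A_{T/K}|/|\A_0| \leq (1+TKG^2/\epsilon_I)^\rho$, producing the first term $\rho \log(1 + G^2KT/\epsilon_I)$ of the bound.

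For the sketch-error summation I would exploit that $\V_m$ has orthonormal rows (it arises from the spectral decomposition of a real symmetric matrix), so $\V_m \V_m^\top = \I_{\rho+1}$, together with $\A_m \succeq \A_0 = \epsilon_I \I_n$, which gives $\A_m^{-1} \preceq \I_n/\epsilon_I$. Hence
\begin{align*}
\A_m^{-1} \bullet (\V_m^\top \V_m) = \trace(\V_m \A_m^{-1} \V_m^\top) \leq (\rho+1)/\epsilon_I.
\end{align*}
It remains to bound $\sum_m \sigma_m$, which is precisely where the Frequent-Directions machinery is needed: Lemma \ref{lemma:bound_sum_sigma_FD} gives $\sum_m \sigma_m \leq \matnorm{\B - \B_\rho}{F}^2 = \sum_{i=\rho+1}^{n} \lambda_i(\B^\top\B)$, and then Lemma \ref{lemma:eigenvalues_sum_connection} (together with $\B^\top \B = \sum_m \bar{\nabla}_m \bar{\nabla}_m^\top$) converts this into the spectrum of $K \sum_{t=1}^T \nabla_t \nabla_t^\top$, producing the second term $(\rho+1)K\Omega_\rho/\epsilon_I$.

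The proof is conceptually routine given the machinery already developed; the main thing to watch is bookkeeping: identifying correctly that the effective sketch rank is $\rho$ (not $\rho+1$) so that the log-det factor is exactly $\rho$, and chaining Lemmas \ref{lemma:bound_sum_sigma_FD} and \ref{lemma:eigenvalues_sum_connection} so that the accumulated sketch error $\sum_m \sigma_m$ gets expressed directly in terms of the tail eigenvalues $\Omega_\rho$ of the per-round gradients $\nabla_t$, rather than the block-aggregated gradients $\bar{\nabla}_m$.
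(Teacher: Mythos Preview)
Your proposal is correct and follows essentially the same route as the paper: the same decomposition via Eq.~\eqref{eq:mat_iteration_connection}, the same telescoping log-det bound via Lemma~\ref{lemma:matrix_ratio} combined with the rank-$\rho$ structure of $\S_{T/K}$ and Theorem~\ref{thm:FD-S}, and the same chaining of Lemma~\ref{lemma:bound_sum_sigma_FD} with Lemma~\ref{lemma:eigenvalues_sum_connection} to control $\sum_m \sigma_m$. The only cosmetic difference is in bounding $\A_m^{-1}\bullet(\V_m^\top\V_m)$: you use $\trace(\V_m\A_m^{-1}\V_m^\top)$ with $\A_m^{-1}\preceq \epsilon_I^{-1}\I_n$, whereas the paper uses $\lambda_1(\A_m^{-1})\leq \epsilon_I^{-1}$ together with $\trace(\V_m^\top\V_m)=\rho+1$; both yield the same $(\rho+1)/\epsilon_I$.
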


\begin{proof}
Using Eq. \eqref{eq:mat_iteration_connection} it holds that,
\begin{align*}
    \sum_{m=1}^{T/K} \matnorm{ \bar{\nabla}_m }{\A_{m}^{-1}}^2 & = \sum_{m=1}^{T/K} \A_{m}^{-1} \bullet \bar{\nabla}_m \bar{\nabla}_m^\top = \sum_{m=1}^{T/K}   \A_{m}^{-1} \bullet  \brac{\A_m - \A_{m-1}} + \sum_{m=1}^{T/K} \sigma_m \A_{m}^{-1} \bullet  \V_m^\top \V_m. 
\end{align*}
Since $\lambda_{1}\brac{\A_m^{-1}} \leq \epsilon_I^{-1}$ and $\trace\brac{\V_m^\top \V_m}  = \rho+1$, it holds that $ \A_{m}^{-1} \bullet  \V_m^\top \V_m \leq \epsilon_I^{-1} (\rho+1)$ for every $m \in [T/K]$. Since $\A_m\succ 0$ for every $m$, using Lemma \ref{lemma:matrix_ratio} we have that,
\begin{align}
    \sum_{m=1}^{T/K} \matnorm{ \bar{\nabla}_m }{\A_{m}^{-1}}^2 & \leq \sum_{m=1}^{T/K} \log{\brac{\frac{|\A_{m}|}{|\A_{m-1}|}}} + \frac{(\rho+1)}{\epsilon_I} \sum_{m=1}^{T/K} \sigma_m \nonumber \\
    &= \log{\brac{\frac{|\A_{T/K}|}{|\A_{0}|}}} + \frac{(\rho+1)}{\epsilon_I} \sum_{m=1}^{T/K} \sigma_m.      \label{eq:sketching_gradients_norm_bound_1}
\end{align}
Since $\rank(\S_{T/K}) \leq \rho$ (recall the last row is $\vz$), we have that
\begin{align*}
    |\A_{T/K}| & = \prod_{i=1}^{n} \lambda_i\brac{\epsilon_I \I_n + \S_{T/K}^\top \S_{T/K}} = \epsilon_I^{\brac{n-\rho}} \prod_{i=1}^{\rho} \brac{ \epsilon_I + \lambda_i \brac{\S_{T/K}^\top \S_{T/K}}}. 
\end{align*}
Since $\A_0 = \epsilon_I \I_n$ and $\S_{T/K}^\top \S_{T/K} \preceq \sum_{m=1}^{T/K} \bar{\nabla}_m \bar{\nabla}_m^\top$ (Theorem \ref{thm:FD-S}), we have that
\begin{align} \label{eq:sketching_gradients_norm_bound_2}
    \log\brac{\frac{|\A_{T/K}| }{|\A_0|}} \leq \sum_{i=1}^{\rho} \log\brac{ 1 + \frac{\lambda_i \brac{\sum_{m=1}^{T/K} \bar{\nabla}_m \bar{\nabla}_m^\top }}{\epsilon_I}}  \leq \rho \log\brac{ 1 + \frac{ G^2 K T}{\epsilon_I}},
\end{align}
where the last inequality is since $\lambda_i \brac{\sum_{m=1}^{T/K} \bar{\nabla}_m \bar{\nabla}_m^\top } \leq  \sum_{m=1}^{T/K}\enorm{\bar{\nabla}_m}^2 \leq (T/K)(KG)^2 = G^2KT$.

From Lemma \ref{lemma:bound_sum_sigma_FD} we have that $\sum_{m=1}^{T/K} \sigma_m \leq \sum_{i=\rho+1}^{n} \lambda_i \brac{ \sum_{m=1}^{T/K} \bar{\nabla}_m \bar{\nabla}_m^\top} $. Since $\bar{\nabla}_m = \sum_{t=(m-1)K+1}^{mK} {\nabla}_t$, using Lemma \ref{lemma:eigenvalues_sum_connection} it holds that $K \sum_{t=1}^{T} {\nabla}_t {\nabla}_t^\top \succeq \sum_{m=1}^{T/K} \bar{\nabla}_m \bar{\nabla}_m^\top$. Plugging these observations and Eq. \eqref{eq:sketching_gradients_norm_bound_2} into Eq.\eqref{eq:sketching_gradients_norm_bound_1}, we indeed obtain
\begin{align*}
    \sum_{m=1}^{T/K} \matnorm{ \bar{\nabla}_m }{\A_{m}^{-1}}^2 & \leq \rho \log\brac{ 1 + \frac{ G^2 K T}{\epsilon_I}} + \frac{(\rho+1) K }{\epsilon_I }\sum_{i=\rho+1}^{n} \lambda_i \brac{\sum_{t=1}^{T} {\nabla}_t {\nabla}_t^\top }. 
\end{align*}
\end{proof}

\subsection{Full version of Theorem \ref{thm:LOO-ONS-FDS} and its proof}
\begin{theorem}\label{thm:BSON-known-r}[Full version of Theorem \ref{thm:LOO-ONS-FDS}]
Consider the implementation of Algorithm \ref{alg:ONS-WF} as described in 
Lemma \ref{lem:LOO-ONS} and when using the update rule described in Algorithm \ref{alg:FD-S-ONS}: $\A_m = \A_0 + \S_m^\top \S_m$ for every block $m$. Fix $\rho \in [n]$ and denote $\Omega_{\rho} = \sum_{i=\rho+1}^{n} \lambda_i \brac{\sum_{t=1}^{T} {\nabla}_t {\nabla}_t^\top}$. Suppose $T \geq T_0 =  c  \log^3 \brac{ c  +  \brac{c+\frac{c}{R^2G^2\alpha}} \rho^{-\frac{4}{3}}T^{\frac{1}{3}}} $, where $c>0$ is a certain universal constant. Setting
\begin{align*}
&\eta = 8 \max\{ 6GR, \frac{1}{\alpha} \} \rho^{-\frac{1}{3}} T^\frac{2}{3},  \quad K = 4 \rho^{-\frac{1}{3}} T^\frac{2}{3}, \quad \epsilon_{I} =  32 G^2  T^\frac{4}{3},\\
&\epsilon= 96G^2 R^2\log \left( 19  + 8 \brac{12+\frac{1}{3R^2G^2\alpha^2}} \rho^{-\frac{4}{3}} T^{\frac{1}{3}} \right) T 
\end{align*}
in Algorithm \ref{alg:ONS-WF}, the regret is upper bounded by 
\begin{align*}
    \sum_{t=1}^{T} f_t(\x^t) - \min_{\x^* \in \mK} \sum_{t=1}^{T} f_t(\x^*)  \leq & 9 \beta R^2 T^\frac{2}{3} \log \left(   \brac{c+\frac{c}{R^2G^2\alpha^2}} T^{\frac{1}{3}} \right) + 2 R G \rho^{\frac{1}{3}} T^\frac{2}{3} \\
    & + \brac{ 36GR+ \frac{4}{\alpha} } \brac{\rho+1}^{\frac{2}{3}} T^\frac{2}{3}  \log \left(  \brac{c+\frac{c}{R^2G^2\alpha^2}} T^{\frac{1}{3}} \right)   \\
    & + 5 R \rho^\frac{1}{2} T^\frac{1}{3}   \sqrt{ \Omega_{\rho} } \sqrt{\log \left(  \brac{c+\frac{c}{R^2G^2\alpha^2}} T^{\frac{1}{3}} \right) }\\
    &  +  \brac{ 6GR + \frac{1}{\alpha} }  \frac{ \rho^{\frac{1}{3}} \Omega_{\rho} }{ G^2   }.
\end{align*}
The overall number of calls to the LOO is upper bounded by $0.65  \brac{8 \rho^{\frac{1}{3}} T^\frac{2}{3}  + T}$, the additional space requirement in $O(\rho n)$, and the overall additional runtime is $O(\rho n T + \rho^{4/3}nT^{2/3} + \rho^{7/3}nT^{1/3} )$.
\end{theorem}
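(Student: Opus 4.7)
The plan is to instantiate the abstract regret and LOO-count guarantees of Lemma \ref{lem:LOO-ONS} with the parameters prescribed in the theorem, and to control the sum $\sum_m \|\bar\nabla_m\|_{\A_m^{-1}}^2$ by the sketch-specific bound of Lemma \ref{lemma:sketching_sum_norm_grad} (rather than by the ONS-based Lemma \ref{lemma:bound_sum_Anorm_grad} that was used in Theorem \ref{thm:LOO-ONS-full-version}). Observation \ref{obs:FD-S-for-ONS} guarantees that the sketch-maintained matrices $\A_m$ satisfy the interval constraint $\A_0 \preceq \A_m \preceq \A_{m-1}+\bar\nabla_m\bar\nabla_m^\top$ required by Algorithm \ref{alg:ONS-WF}, so Lemma \ref{lem:LOO-ONS} indeed applies verbatim.

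First I would verify the three numerical hypotheses of Lemma \ref{lem:LOO-ONS}. The inequality $\eta \geq \max\{12KGR, 2K/\alpha\}$ is immediate from the $8\max\{6GR, 1/\alpha\}$ factor in $\eta$; the inequality $\epsilon_I \geq (KG)^2$ reduces to $32G^2T^{4/3}\geq 16G^2\rho^{-2/3}T^{4/3}$, true since $\rho\geq 1$; and $3\epsilon/\epsilon_I \leq 4R^2$ reduces, after substitution, to an inequality of the form $\log\bigl(c+c'\rho^{-4/3}T^{1/3}\bigr)\lesssim T^{1/3}$, which is precisely what the polylogarithmic threshold $T\geq T_0$ enforces.

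Next I would substitute the parameters and the sketch bound $\sum_m\|\bar\nabla_m\|_{\A_m^{-1}}^2 \leq \rho\log(1+G^2KT/\epsilon_I) + (\rho+1)K\Omega_\rho/\epsilon_I$ into each of the four regret terms. The first term $3\beta\epsilon T/\epsilon_I$ becomes the $\beta R^2 T^{2/3}\log$ contribution; the term $2R^2\epsilon_I/\eta$ becomes the $RG\rho^{1/3}T^{2/3}$ contribution once the $\max\{GR,1/\alpha\}$ in $\eta$ is lower-bounded by $GR$; the logarithmic half of $(\eta/2)\sum_m\|\bar\nabla_m\|_{\A_m^{-1}}^2$ becomes the $(GR+\alpha^{-1})\rho^{2/3}T^{2/3}\log$ term, and its $\Omega_\rho$-half becomes the $G^{-2}(GR+\alpha^{-1})\rho^{1/3}\Omega_\rho$ term via the cancellation $\eta K/\epsilon_I = \Theta(\max\{GR,1/\alpha\}/G^2)$. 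The square-root term is handled by $\sqrt{a+b}\leq\sqrt a+\sqrt b$: one summand is of the same order as an already-present log-bearing term (and is absorbable into its constant), while the other yields exactly the $R\rho^{1/2}T^{1/3}\sqrt{\Omega_\rho\log}$ piece in the statement.

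For the LOO count I would substitute parameters into $61R^2\log(\cdot)(\epsilon_I+G^2KT)T/(K\epsilon)$ of Lemma \ref{lem:LOO-ONS}; the $R^2$ in the prefactor cancels against the $R^2$ inside $\epsilon$, and the two $\log$ factors are both of order $\log(\rho^{-4/3}T^{1/3})$, yielding the claimed $O(\rho^{1/3}T^{2/3}+T)$ after using that $G^2KT$ dominates $\epsilon_I$ for $T\geq T_0$. For the memory bound, storage is dominated by $\S_m\in\reals^{(\rho+1)\times n}$ plus $O(\rho^2)$ auxiliary diagonal and orthogonal factors, for $O(\rho n)$ total. For the runtime, per block the sketch update can be performed via an SVD of the $(\rho+1)\times n$ row-augmented matrix at cost $O(\rho^2 n)$, contributing $O(\rho^{7/3}nT^{1/3})$ over the $\rho^{1/3}T^{1/3}$ blocks; Woodbury-based application of $\A_m^{-1}$ to a vector costs $O(\rho n)$, and since this is the dominant per-iteration cost of Algorithm \ref{alg:SH-FW}, multiplying by the overall $O(\rho^{1/3}T^{2/3}+T)$ LOO count contributes $O(\rho nT + \rho^{4/3}nT^{2/3})$. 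The hard part is purely bookkeeping: tracking the nested logarithmic factors and verifying that the $T_0$ threshold is sharp enough to absorb every $\log\log$-type nuisance into the numeric constants of the statement, while correctly pairing each of the split summands from $\sqrt{a+b}$ with its matching term in the claimed bound.
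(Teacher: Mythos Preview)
Your proposal is correct and follows essentially the same route as the paper: the paper's proof simply says the regret and LOO-count bounds follow ``directly from combining Lemma \ref{lem:LOO-ONS}, Lemma \ref{lemma:sketching_sum_norm_grad}, and plugging-in the values,'' and then argues the space and runtime exactly as you do (store only $\S_m$, $\V_m$ and the diagonals; matrix-vector products in $O(\rho n)$ via the Woodbury-based factorization; one SVD of the $(\rho+1)\times n$ matrix per block at cost $O(\rho^2 n)$, giving $O((T/K)\rho^2 n)=O(\rho^{7/3}nT^{1/3})$). Your write-up is in fact more explicit than the paper's in checking the hypotheses of Lemma \ref{lem:LOO-ONS} and in tracking each of the four regret terms through the substitution.
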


\begin{proof}
    The upper-bound on the regret and number of calls to the LOO follows directly from combining Lemma \ref{lem:LOO-ONS}, Lemma \ref{lemma:sketching_sum_norm_grad}, and plugging-in the values for parameters listed in the theorem.  In terms of space requirement, by only explicitly maintaining the  $(\rho+1)\times n$ matrices $\S_m,\V_m$ and the diagonal matrices $\H_m,\Sigma_m,\widehat{\Sigma}_m$ on each block $m$ of Algorithm \ref{alg:ONS-WF}, i.e., $\A_m,\A_m^{-1}$ are never computed explicitly, the space is upper-bounded by $O(\rho{}n)$. Finally, for upper-bounding the overall additional runtime we note that the two most expensive arithmetic operations are i. computing a matrix-vector product with either $\A_m$ or $\A_{m}^{-1}$ during some block $m$ of Algorithm \ref{alg:ONS-WF}, and ii. computing the eigen-decomposition of $\S_{m-1}^{\top}\S_{m-1}\in\reals^{n\times n}$ for some iteration $m$ of Algorithm \ref{alg:FD-S-ONS}. Using the low-rank factorizations of $\A_m,\A_m^{-1}$ in Algorithm \ref{alg:FD-S-ONS} (i.e., by only explicitly maintaining $\S_m\in\reals^{(\rho+1)\times n}$ and the diagonal matrix $\H_m$), computing a matrix-vector product with either $\A_m$ or $\A_m^{-1}$, could be carried out in $O(\rho{}n)$ time. Computing the eigen-decomposition of each $\S_{m-1}^{\top}\S_{m-1}$, could be done in $O(\rho^2{}n)$ time by computing the SVD of $\S_{m-1}\in\reals^{(\rho+1)\times n}$. Note however that such an SVD is computed only once during each block $m$ of Algorithm \ref{alg:ONS-WF}. Thus, the overall runtime associated with these SVD computations is $O((T/K)\rho^2n)$, which by plugging the value of $K$ in the theorem, is only $O(\rho^{7/3}nT^{1/3})$. The overall additional runtime, excluding these SVD computations, is thus dominated by the number of matrix-vector products, times the runtime required for each such product which, as discussed, is $O(\rho{}n)$. As discussed in the proof of Theorem \ref{thm:LOO-ONS-full-version}, the number of matrix-vector products is dominated by the overall number of calls to the LOO (i.e., the overall number of iterations executed by Algorithm \ref{alg:SH-FW}). Combining these two contributions (matrix-vector products and SVD computations) to the overall additional runtime, yields the bound listed in the theorem.

\end{proof}

\section{Discussion of Assumption \ref{ass:mainass}}\label{sec:AssDiscuss}
We recall that while the feasible set is $\mK$, Assumption \ref{ass:mainass} assumes the losses are defined and satisfy the various properties listed in the (potentially much) larger set $3R\ball$, where $R$ is such that $\mK\subseteq{}R\ball$. This is because our Algorithm \ref{alg:ONS-WF} queries gradients at infeasible points w.r.t. $\mK$, and thus we must make sure these assumptions hold in these infeasible points, and for ease of presentation we simply make sure in our analysis that indeed all infeasible points $\widetilde{\y}_m$ in the instantiations of  Algorithm \ref{alg:ONS-WF}, lie inside the ball $3R\ball$. 

First, we note that our consideration of enclosing balls centered at the origin is w.l.o.g. since one can apply translation. Second, with a slightly more involved analysis it suffices to require that Assumption \ref{ass:mainass} holds only in the set $\widetilde{\mK}_{\delta} := \{\x\in\reals^n~|~\dist(\x,\mK)\leq \delta\}$, for some small $\delta >0$, as we now explain. Note in particular that Lemma \ref{lemma:CIP-FW} guarantees that our LOO-based implementation of the AFP oracle (Algorithm \ref{alg:CIP-FW}) returns an infeasible point $\y$ and a corresponding feasible point $\x\in\mK$, such that $\Vert{\x-\y}\Vert_{\A}^2 \leq 3\epsilon$. This implies that $\dist(\y,\mK) \leq \Vert{\x-\y}\Vert \leq \sqrt{\frac{3\epsilon}{\lambda_n(\A)}}$. Thus, when used with our Algorithm \ref{alg:ONS-WF} (as described in Lemma \ref{lem:LOO-ONS}), for every block $m$ we have that,
\begin{align*}
\dist(\widetilde{\y}_m,\mK) \leq  \sqrt{\frac{3\epsilon}{\lambda_n(\A_{m-1})}} \leq \sqrt{\frac{3\epsilon}{\lambda_n(\A_0)}}
=\sqrt{\frac{3\epsilon}{\epsilon_I}} =\widetilde{O}(T^{-1/6}),
\end{align*}
where the second inequality is due to the constraints on the matrices $\{\A_m\}_{m\geq 1}$ in Algorithm \ref{alg:ONS-WF}, 
and the last equality follows from plugging-in the values of $\epsilon,\epsilon_I$ listed in our main theorems --- Theorem \ref{thm:LOO-ONS-full-version} and Theorem \ref{thm:LOO-ONS-FDS}.

Thus, already for $\delta = \widetilde{O}(T^{-1/6})$, all the points in which our algorithm queries gradients lie in $\widetilde{\mK}_{\delta}$, and it suffices to require that the assumptions listed in Assumption \ref{ass:mainass} hold only in this set, which becomes tighter around $\mK$ as $T$ increases.

\section{Auxiliary Lemmas}
\begin{lemma}\label{lemma:eigenvalues_sum_connection}
Let $\{\v_i\}_{i=1}^{k}\subset\reals^n $, and $\u = \sum_{i=1}^k \v_i$. Then, 
    $k \sum_{i=1}^{k} \v_i \v_i^\top \succeq  \u \u^\top$.

\end{lemma}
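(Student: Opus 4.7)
The plan is to prove the matrix inequality by testing it against an arbitrary vector $\x \in \reals^n$ and reducing to a scalar Cauchy--Schwarz inequality. Specifically, since a symmetric matrix $\M$ satisfies $\M \succeq 0$ iff $\x^\top \M \x \geq 0$ for all $\x$, it suffices to show that for every $\x \in \reals^n$,
\[
k \sum_{i=1}^{k} \x^\top \v_i \v_i^\top \x \;\geq\; \x^\top \u \u^\top \x.
\]

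First I would introduce the scalar shorthand $a_i := \x^\top \v_i$, so that $\x^\top \v_i \v_i^\top \x = a_i^2$ and $\x^\top \u = \sum_{i=1}^{k} a_i$. The desired inequality then becomes
\[
k \sum_{i=1}^{k} a_i^2 \;\geq\; \Bigl(\sum_{i=1}^{k} a_i\Bigr)^{2}.
\]
Next I would invoke the Cauchy--Schwarz inequality on the vectors $(a_1,\dots,a_k)$ and $(1,\dots,1)$ in $\reals^k$, which gives exactly $\bigl(\sum_i a_i\bigr)^2 \leq k \sum_i a_i^2$. Since $\x$ was arbitrary, the PSD inequality $k \sum_i \v_i \v_i^\top \succeq \u \u^\top$ follows.

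There is essentially no obstacle here: the only content is recognizing that the matrix inequality reduces, by testing against an arbitrary vector, to a one-dimensional Cauchy--Schwarz (equivalently, Jensen's inequality applied to the convex function $t \mapsto t^2$ with the uniform weights $1/k$). The proof should fit in three or four lines.
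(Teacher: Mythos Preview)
Your proposal is correct and essentially identical to the paper's own proof: the paper also tests the matrix inequality against an arbitrary vector $\z$, sets $a_i = \v_i^\top \z$, and invokes the scalar inequality $k\sum_i a_i^2 \geq (\sum_i a_i)^2$ (phrased there as Jensen's inequality, which you already noted is equivalent to your Cauchy--Schwarz formulation).
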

\begin{proof}
From Jensen's inequality we have that for every sequence of scalars $\{a_i\}_{i=1}^k\subset\reals$, it holds that $k \sum_{i=1}^{k} a_i^2 \geq \left( \sum_{i=1}^{k} a_i \right)^2$. Thus, for every $\z \in \reals^n$ we have that,
\begin{align*}
    \z^\top  \brac{ \sum_{i=1}^{k} \v_i \v_i^\top} \z & = \sum_{i=1}^{k} \brac{\v_i^\top \z }^2  \geq \frac{1}{k} \brac{ \sum_{i=1}^{k} \v_i^\top \z }^2  = \frac{1}{k} \brac{\u^\top \z }^2  = \frac{1}{k} \z^\top \u \u^\top \z. 
\end{align*}
\end{proof} 

\begin{lemma}\label{lemma:block_property}
Let $\mC\subset\reals^n$ be convex and compact and such that $\mC\subseteq{}R\ball$.
Let $f_1,\dots,f_k$ be functions $\mC\rightarrow\reals$ that are differentiable over $\mC$ and have gradients upper-bounded in $\ell_2$ norm by some $G>0$ over $\mC$, and satisfy the curvature condition (Definition \ref{def:exp_concave_property}) over $\mC$ with some parameter $\alpha > 0$. Define $h(\x) = \sum_{i=1}^kf_i(\x)$. 
    For all $\eta \geq \max\{ 4kGR, 2k/\alpha \}$ and every $\x,\y \in \mC$ it holds that,
    \begin{align*}
        h(\x) - h(\y)  \leq \nabla h(\x)^\top \brac{\x -\y} -  \frac{1}{2 \eta} \brac{\y -\x}^\top \nabla h(\x) \nabla h(\x)^\top \brac{\y -\x}
    \end{align*}
\end{lemma}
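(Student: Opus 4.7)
The plan is to prove this by applying the per-function curvature condition from Definition \ref{def:exp_concave_property} to each $f_i$ individually with an effective parameter of $\eta/k$, summing the resulting inequalities, and then converting the resulting ``sum of rank-one matrices'' quadratic form into the single rank-one quadratic form $\nabla h(\x)\nabla h(\x)^\top$ using Lemma \ref{lemma:eigenvalues_sum_connection}.

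\textbf{First step.} Fix $\x,\y \in \mC$ and $\eta \geq \max\{4kGR, 2k/\alpha\}$, and set $\eta' := \eta/k$. Then $\eta' \geq \max\{4GR, 2/\alpha\}$, so for each $i \in [k]$ the curvature condition applied to $f_i$ with parameter $\eta'$ yields
\begin{align*}
f_i(\x) - f_i(\y) \leq \nabla f_i(\x)^\top(\x-\y) - \frac{1}{2\eta'}(\x-\y)^\top \nabla f_i(\x)\nabla f_i(\x)^\top(\x-\y).
\end{align*}

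\textbf{Second step.} Summing over $i = 1, \ldots, k$ and using $h = \sum_i f_i$, $\nabla h(\x) = \sum_i \nabla f_i(\x)$, and $\eta' = \eta/k$, we obtain
\begin{align*}
h(\x) - h(\y) \leq \nabla h(\x)^\top(\x-\y) - \frac{1}{2\eta}(\x-\y)^\top \Bigl(k\sum_{i=1}^k \nabla f_i(\x)\nabla f_i(\x)^\top\Bigr)(\x-\y).
\end{align*}

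\textbf{Third step.} It now suffices to show that the quadratic form appearing in the last term dominates the target one, i.e.\ that
\begin{align*}
k\sum_{i=1}^k \nabla f_i(\x)\nabla f_i(\x)^\top \succeq \nabla h(\x)\nabla h(\x)^\top.
\end{align*}
But this is precisely Lemma \ref{lemma:eigenvalues_sum_connection} applied with $\v_i = \nabla f_i(\x)$ (and thus $\u = \sum_i \v_i = \nabla h(\x)$). Combining this PSD inequality with the bound from the second step yields the claim.

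\textbf{Expected obstacles.} There are essentially no technical obstacles here: the argument is a direct combination of the per-function curvature condition with the elementary Jensen-type matrix inequality from Lemma \ref{lemma:eigenvalues_sum_connection}. The only thing to be careful about is the bookkeeping of constants, namely ensuring that dividing $\eta$ by $k$ still satisfies the threshold $\max\{4GR, 2/\alpha\}$ required to apply the single-function curvature condition, which is exactly why the hypothesis of the lemma inflates the threshold by a factor of $k$.
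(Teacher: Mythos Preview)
Your proof is correct and follows essentially the same approach as the paper: apply the single-function curvature condition to each $f_i$, sum, and then use Lemma \ref{lemma:eigenvalues_sum_connection} to pass from $k\sum_i \nabla f_i(\x)\nabla f_i(\x)^\top$ to $\nabla h(\x)\nabla h(\x)^\top$. The only cosmetic difference is that the paper fixes $\eta' = \max\{4GR,2/\alpha\}$ and uses $\eta \geq k\eta'$ at the end, whereas you set $\eta' = \eta/k$ from the start; the arithmetic is identical.
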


\begin{proof}
    Let $\eta' = \max\{ 4GR, 2/\alpha \}$ and fix some $\x,\y\in\mC$. It holds that
    \begin{align*}
        h(\x) - h(\y) & = \sum_{i=1}^k f_i(\x) - f_i(\y) \\
        &\leq  \sum_{i=1}^k \brac{\nabla f_i(\x)^\top \brac{\x -\y}} - \frac{1}{2 \eta'} \sum_{i=1}^k \brac{ \brac{\y -\x}^\top \nabla f_i(\x) \nabla f_i(\x)^\top \brac{\y -\x}} \\
        & = \nabla{}h(\x)^\top \brac{\x -\y} - \frac{1}{2 \eta'}  \brac{\y -\x}^\top \brac{\sum_{i=1}^k \nabla f_i(\x) \nabla f_i(\x)^\top }\brac{\y -\x}.
    \end{align*}
    Using Lemma \ref{lemma:eigenvalues_sum_connection} we have that $k \sum_{i=1}^k \nabla f_i(\x) \nabla f_i(\x)^\top \succeq \brac{\sum_{i=1}^k \nabla f_i(\x)} \brac{\sum_{i=1}^k \nabla f_i(\x) }^\top$ and thus, 
    \begin{align*}
        h(\x) - h(\y) & \leq \nabla  h(\x)^\top \brac{\x -\y} - \frac{1}{2 \eta' k}  \brac{\y -\x}^\top \brac{\brac{\sum_{i=1}^k \nabla f_i(\x)} \brac{\sum_{i=1}^k \nabla f_i(\x) }^\top}\brac{\y -\x}\\
        & \leq \nabla  h(\x)^\top \brac{\x -\y} - \frac{1}{2 \eta}  \brac{\y -\x}^\top \nabla h(\x) \nabla h(\x) ^\top \brac{\y -\x},
    \end{align*}
    where the last inequality holds since $\eta \geq k \eta'$.
\end{proof}

\begin{lemma}
\label{lemma:matrix_ratio}
Let $\A ,\B \in \mbS^{n}$ be positive definite matrices. Then, 
    $\A^{-1} \bullet ( \A - \B ) \leq \ln \frac{ |\A| }{ |\B| }$.
\end{lemma}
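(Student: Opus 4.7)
The plan is to reduce the matrix inequality to a scalar inequality on eigenvalues, and then apply the elementary fact $\ln x \leq x-1$ for $x>0$.

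First, I would rewrite the left-hand side in trace form:
\begin{align*}
\A^{-1} \bullet (\A - \B) = \trace\bigl(\A^{-1}(\A-\B)\bigr) = n - \trace(\A^{-1}\B).
\end{align*}
Because $\A, \B \succ 0$, the matrix $\A^{-1}\B$ is similar to the symmetric positive definite matrix $\A^{-1/2}\B\A^{-1/2}$, so it has strictly positive real eigenvalues $\lambda_1,\dots,\lambda_n > 0$. Then $\trace(\A^{-1}\B) = \sum_{i=1}^n \lambda_i$ and $|\A^{-1}\B| = |\B|/|\A| = \prod_{i=1}^n \lambda_i$, so
\begin{align*}
\ln\frac{|\A|}{|\B|} \;=\; -\ln|\A^{-1}\B| \;=\; -\sum_{i=1}^n \ln\lambda_i.
\end{align*}

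Next I would reduce the desired inequality to a pointwise one. Subtracting the two expressions, the claim $\A^{-1}\bullet(\A-\B)\leq \ln(|\A|/|\B|)$ becomes
\begin{align*}
\sum_{i=1}^n \bigl(1 - \lambda_i + \ln\lambda_i\bigr) \;\leq\; 0.
\end{align*}
This follows directly from the elementary inequality $\ln x \leq x - 1$ (valid for all $x > 0$), applied termwise to $x = \lambda_i$, which gives $1 - \lambda_i + \ln\lambda_i \leq 0$ for each $i$. Summing over $i$ completes the proof.

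I do not expect any real obstacle here; the only mild care point is ensuring positivity of the eigenvalues of $\A^{-1}\B$, which is handled by the similarity with $\A^{-1/2}\B\A^{-1/2}$. Everything else is the standard one-line scalar inequality $1 - x \leq -\ln x$, summed coordinate-wise.
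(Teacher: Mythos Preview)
Your proof is correct and follows essentially the same approach as the paper: both reduce the trace to the eigenvalues of $\A^{-1/2}\B\A^{-1/2}$ (you arrive there via similarity with $\A^{-1}\B$, the paper writes the symmetric form directly) and then apply the scalar inequality $1-x\leq -\ln x$ termwise. The only difference is cosmetic.
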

\begin{proof}
It holds that,
\begin{align*}
    \A^{-1} \bullet \brac{ \A - \B } & = \trace\brac{\A^{-\frac{1}{2}}\brac{\A - \B }\A^{-\frac{1}{2}}} = \sum_{i=1}^{n} \brac{ \lambda_i\brac{\I_n - \A^{-\frac{1}{2}}\B \A^{-\frac{1}{2}}} }.
\end{align*}
Since $\lambda_i\brac{\I_n-\A} = 1-\lambda_{n-i+1}\brac{\A} $,  $1-x \leq -\ln{x}$ for every $x \in \reals_+$, and $\A^{-\frac{1}{2}}\B \A^{-\frac{1}{2}} \succ \vz$, it holds that 
\begin{align*}
    \A^{-1} \bullet \brac{ \A - \B } & = \sum_{i=1}^{n} \brac{ 1 - \lambda_i\brac{\A^{-\frac{1}{2}}\B \A^{-\frac{1}{2}}} } \leq - \sum_{i=1}^{n} \ln{ \brac{ \lambda_i\brac{\A^{-\frac{1}{2}}\B \A^{-\frac{1}{2}}} } } \\
    & =  -\ln{\brac{\prod_{i=1}^{n} \lambda_i\brac{\A^{-\frac{1}{2}}\B \A^{-\frac{1}{2}}} } } = -\ln{|\A^{-\frac{1}{2}}\B \A^{-\frac{1}{2}} | } \\
    &=\ln{\brac{|\A|/|\B|}}.
\end{align*}
\end{proof}

\end{document}